\theoremstyle{plain}
\newtheorem{theorem}{Theorem}[section]
\newtheorem{proposition}[theorem]{Proposition}
\newtheorem{lemma}[theorem]{Lemma}
\newtheorem{corollary}[theorem]{Corollary}
\theoremstyle{definition}
\newtheorem{definition}[theorem]{Definition}
\newtheorem{assumption}[theorem]{Assumption}
\theoremstyle{remark}
\newtheorem{remark}[theorem]{Remark}
\icmltitlerunning{Fat--Tailed VI with Anisotropic Tail Adaptive Flows}
\DeclarePairedDelimiterX{\infdivx}[2]{(}{)}{%
  #1\;\delimsize\|\;#2%
}
\newcommand{\infdiv}{D\infdivx}
\newcommand{\dd}{\mathrm{d}}
\newcommand{\cE}{\mathcal{E}}
\newcommand{\cL}{\mathcal{L}}
\newcommand{\cO}{\mathcal{O}}
\newcommand{\cQ}{\mathcal{Q}}
\newcommand{\cS}{\mathcal{S}}
\newcommand{\simiid}{\overset{\text{iid}}{\sim}}
\newcommand{\mI}{\matr{I}}
\newcommand{\RR}{\mathbb{R}}
\newcommand{\EE}{\mathbb{E}}
\newcommand{\PP}{\mathbb{P}}
\newcommand{\cN}{\mathcal{N}}
\begin{document}

\twocolumn[
\icmltitle{Fat--Tailed Variational Inference with Anisotropic Tail Adaptive Flows}

% It is OKAY to include author information, even for blind
% submissions: the style file will automatically remove it for you
% unless you've provided the [accepted] option to the icml2022
% package.

% List of affiliations: The first argument should be a (short)
% identifier you will use later to specify author affiliations
% Academic affiliations should list Department, University, City, Region, Country
% Industry affiliations should list Company, City, Region, Country

% You can specify symbols, otherwise they are numbered in order.
% Ideally, you should not use this facility. Affiliations will be numbered
% in order of appearance and this is the preferred way.
\icmlsetsymbol{equal}{*}

\begin{icmlauthorlist}
\icmlauthor{Feynman Liang}{yyy,comp}
\icmlauthor{Liam Hodgkinson}{yyy,zzz}
\icmlauthor{Michael W. Mahoney}{yyy,zzz}
% \icmlauthor{Firstname4 Lastname4}{sch}
% \icmlauthor{Firstname5 Lastname5}{yyy}
% \icmlauthor{Firstname6 Lastname6}{sch,yyy,comp}
% \icmlauthor{Firstname7 Lastname7}{comp}
% %\icmlauthor{}{sch}
% \icmlauthor{Firstname8 Lastname8}{sch}
% \icmlauthor{Firstname8 Lastname8}{yyy,comp}
% %\icmlauthor{}{sch}
% %\icmlauthor{}{sch}
\end{icmlauthorlist}

\icmlaffiliation{yyy}{Department of Statistics, University of California, Berkeley, CA}
\icmlaffiliation{comp}{Meta, Menlo Park, CA}
\icmlaffiliation{zzz}{International Computer Science Institute, Berkeley, CA}

\icmlcorrespondingauthor{Feynman Liang}{feynman@berkeley.edu}
%\icmlcorrespondingauthor{Firstname2 Lastname2}{first2.last2@www.uk}

% You may provide any keywords that you
% find helpful for describing your paper; these are used to populate
% the "keywords" metadata in the PDF but will not be shown in the document
\icmlkeywords{Machine Learning, ICML}

\vskip 0.3in
]

% this must go after the closing bracket ] following \twocolumn[ ...

% This command actually creates the footnote in the first column
% listing the affiliations and the copyright notice.
% The command takes one argument, which is text to display at the start of the footnote.
% The \icmlEqualContribution command is standard text for equal contribution.
% Remove it (just {}) if you do not need this facility.

\printAffiliationsAndNotice{}  % leave blank if no need to mention equal contribution
%\printAffiliationsAndNotice{\icmlEqualContribution} % otherwise use the standard text.

\begin{abstract}
    While fat-tailed densities commonly arise as posterior and marginal distributions in
    robust models and scale mixtures, they present challenges when Gaussian-based 
    variational inference fails to capture tail decay accurately. 
    We first improve previous theory on tails of Lipschitz flows 
    %\citep{jaini2020tails} 
    by quantifying how the tails affect the \emph{rate} of tail decay 
    and by expanding the theory to non-Lipschitz polynomial flows.
    Then, we develop an alternative theory for multivariate tail parameters which is 
    sensitive to tail-anisotropy. 
    In doing so, we unveil a fundamental problem which plagues many existing flow-based 
    methods: they can only model tail-isotropic distributions (i.e., distributions 
    having the same tail parameter in every direction).
    To mitigate this and enable modeling of tail-anisotropic targets, we propose 
    anisotropic tail-adaptive flows (ATAF).
    Experimental results on both synthetic and real-world targets confirm that ATAF 
    is competitive with prior work while also exhibiting appropriate tail-anisotropy.
\end{abstract}

\section{Introduction}
\label{sec:intro}

%\michael{Example}
%\liam{Example}

Flow based methods 
\citep{papamakarios2021normalizing}
have proven to be effective techniques to model complex
probability densities. They compete with the state of the art on
density estimation \citep{huang2018neural,durkan2019neural,jaini2020tails},
generative modeling \citep{chen2019residual,kingma2018glow}, and variational inference \citep{kingma2016improved,agrawal2020advances} tasks.
These methods start with a random variable $X$ having a simple and tractable
distribution $\mu$, and apply a learnable transport map $f_\theta$ to build
another random variable $Y = f_\theta(X)$ with a more expressive \emph{pushforward}
probability measure $(f_\theta)_\ast \mu$ \citep{papamakarios2021normalizing}.
In contrast to the implicit distributions \citep{huszar2017variational} produced by generative adversarial networks (GANs), flow based methods restrict the transport map $f_\theta$ to be invertible and to have efficiently-computable Jacobian determinants.
As a result, probability density functions can be tractably computed
through direct application of a change of variables
\begin{align}
    \label{eq:change-of-variable}
    p_{Y}(y)
      = p_{X}(f_\theta^{-1}(y)) \left\lvert \det
        \left.\frac{d f_\theta^{-1}(z)}{dz} \right\vert_{z=y}
      \right\rvert
\end{align}

While recent developments \citep{chen2019residual,huang2018neural,durkan2019neural} have focused primarily
on the transport map $f_\theta$, the base distribution $\mu$ has received comparatively less investigation. 
%PREVIOUS%We believe this asymmetric focus is detrimental to the research community because the sensible default choice of Gaussian base distribution $\mu = \cN(0,\mI)$ may result in significant limitations to the expressivity of the model (\Cref{thm:distn_class_closed}). 
%PREVIOUS%\michael{Clarify what does ``asymmetric'' mean in that sentence, does it mean that the focus is more on the transport map than the base distribution, or something else.}
The most common choice for the base distribution is standard Gaussian $\mu = \cN(0,\mI)$.
However, in \Cref{thm:distn_class_closed}, we show this choice results in significant
restrictions to the expressivity of the model, limiting its utility for data that
exhibits fat-tailed (or heavy-tailed) structure.
Prior work addressing heavy-tailed flows \citep{jaini2020tails}
are limited to tail-isotropic base distributions ---
in \Cref{prop:isotropic-pushforward}, we also prove flows built on these base distributions
are unable to accurately model multivariate anisotropic fat-tailed structure.
% \michael{I'm a little confused by this paragraph.  We cite two results of ours in this paper, but it sounds like prior work, since we then say that we will address those issues.  Are we saying that prior work did such-and-such, and one of our contributions is to make that explicit and then improve that.  Or was that known.  In either case, we should reword here.}
%PREVIOUS%Addressing this issue was an important aim for recent work \citep{jaini2020tails}, and our work here represents an additional advancement towards this goal.

\begin{figure*}[htbp]
  \centering
  \includegraphics{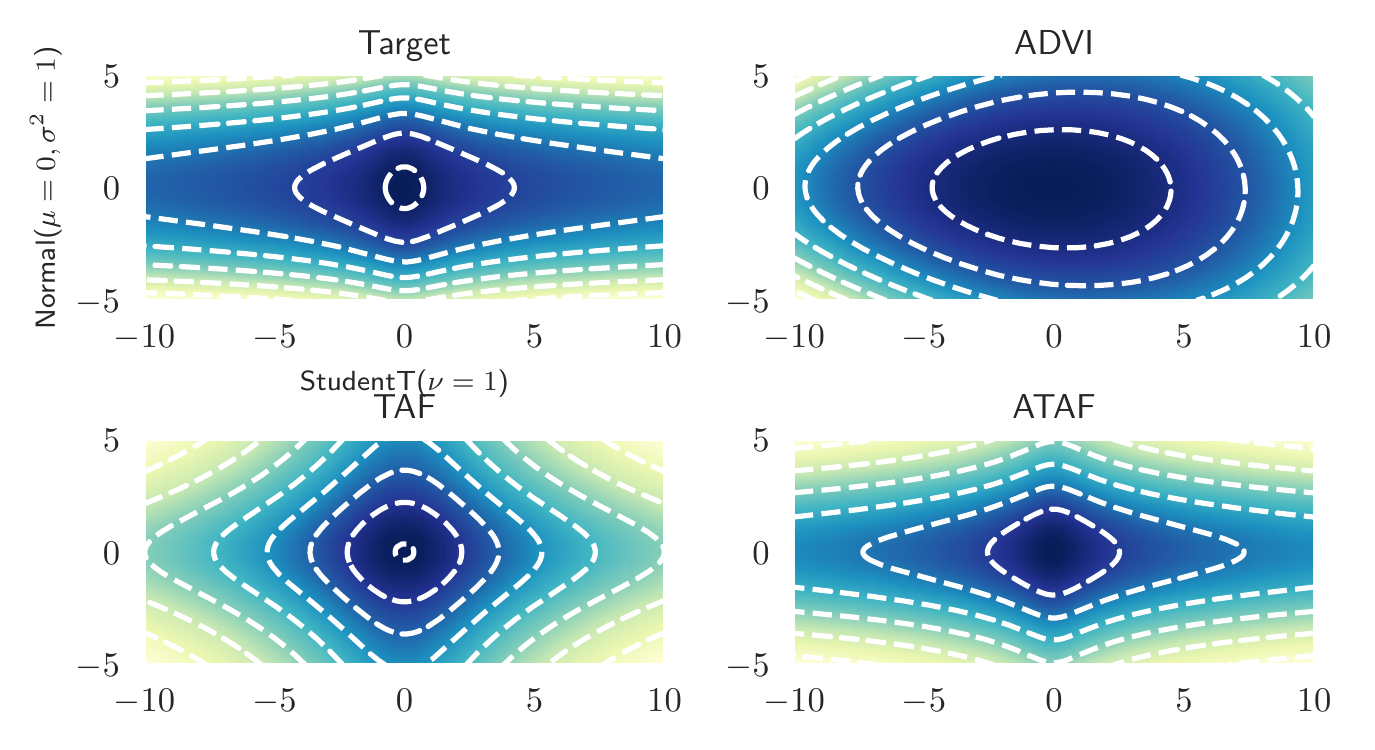}
    \vspace{-5mm}
  \caption{
    Variational inference against a tail-anisotropic target distribution $\cN(0,1) \otimes \text{StudentT}(\nu=1)$ (top left).
    Only ATAF (bottom right) is able to correctly reproduce the tail-anisotropy (fat-tailed along $x$-axis,
    Gaussian along $y$-axis).
    In contrast, ADVI's (top right) Gaussian base distribution and TAF's (bottom left) tail-isotropic $\prod_{i=1}^2 \text{StudentT}(\nu)$
    base distribution  can only model tail-isotropic distributions (\Cref{prop:isotropic-pushforward})
    which erroneously imposes power-law tails with the same rate of decay along both the $x$ and $y$ axes.
    \vspace{-5mm}
  }
  \label{fig:pancake}
\end{figure*}

Our work here aims to identify and address these deficiencies.
To understand the impact of the base distribution $\mu$ in flow-based models,
we develop and apply theory for fat-tailed random variables and their transformations under Lipschitz-continuous functions.
Our approach leverages the theory of concentration functions \citep[Chapter 1.2]{ledoux2001concentration} to significantly sharpen and extend prior results
\citep[Theorem 4]{jaini2019sum} by precisely describing the tail parameters of the pushforward distribution $(f_\theta)_\ast \mu$ under both Lipschitz-continuous (\Cref{thm:distn_class_closed}) and polynomial (\Cref{corr:closure_polynomials}) transport maps.
% \michael{We say ``characterize'' but what do we mean.  Probably this is the place to say that it is particularly important to be flexible to have different tail indices in different directions.  Then some of the later sentences will make sense.}
%In the multivariate setting, we observe that it is important to be flexible and allow for a multivariate distribution
%to have different tail parameters in different directions. %Accordingly,
In the multivariate setting,
we develop a theory of direction-dependent tail parameters (\Cref{def:mv-tail-param}), and show that tail-isotropic base distributions yield tail-isotropic pushforward measures (\Cref{prop:isotropic-pushforward}). 
As a consequence of \Cref{prop:isotropic-pushforward}, prior methods \citep{jaini2020tails} are limited in that
they are unable to capture \emph{tail-anisotropy}.
This motivates the construction of \emph{anisotropic tail adaptive flows} (ATAF, \Cref{def:ataf}) as a means to
alleviate this issue (\Cref{remark:anisotropic}) and improve modeling of tail-anisotropic distributions.
% \michael{What limitation is precisely being referred to, since in the previous sentence we say that we get several results in this paper, and then we say how our results are a limitation.  Probably the best way to do this is to be more detailed, e.g., say that our \Cref{prop:isotropic-pushforward} means that the mapping can't introduce anisotropy.}
Our experiments show ATAF exhibits correct tail behaviour in synthetic target distributions exhibiting fat-tails (\Cref{fig:cauchy_normal_student}) and tail-anisotropy (\Cref{fig:pancake}).
On realistic targets,
% prior work \citep[Table 2]{jaini2020tails} showed tail-adaptive flows (TAF, a subset of ATAF)
% can improve density modeling and 
we find that ATAF can yield improvements in variational inference (VI) by capturing potential tail-anisotropy (\Cref{sec:experiments}).
% \michael{That sentence sounds like we are a footnote on them.  I would say that we get results for a ragne of cases, both for isotropic and anisotropic cases, thereby improving prior work that considered Gaussian things as well as prior work from \cite{jaini2020tails} that considered isotropic HT.}
\vspace{-3mm}
\subsection*{Related Work}
\vspace{-1mm}
\paragraph{Fat-tails in variational inference.}

% The bulk of related work focuses on fat-tails arising from relaxing priors.
Recent work in variational autoencoders (VAEs) have considered relaxing Gaussian assumptions to heavier-tailed distributions \citep{mathieu2019disentangling,chen2019residual,boenninghoff2020variational,abiri2020variational}.
In \citep{mathieu2019disentangling}, a StudentT prior distribution $p(z)$ is considered over the latent code
$z$ in a VAE with Gaussian encoder $q(z \mid x)$. They argue
%\footnote{\url{https://github.com/iffsid/disentangling-disentanglement/blob/3396d40f46c34dd928a1241f567a86276b0ff41b/src/main.py\#L52}},
that the anisotropy of a StudentT product distribution leads to more disentangled representations as compared to the standard choice of Normal distributions.
A similar modification is performed in \citep{chen2020use}, for
a coupled VAE (see \citep{cao2019coupled}). It showed improvements in the marginal
likelihoods of reconstructed images. In addition,
\citet{boenninghoff2020variational} consider a mixture of StudentTs for the
prior $p(z)$. To position
our work in context, note that the encoder $q(z \mid x)$ may be viewed
as a variational approximation to the posterior $p(z \mid x)$ defined by the
decoder model $p(x \mid z)$ and the prior $p(z)$. Our work differs from
\citep{mathieu2019disentangling,chen2020use,boenninghoff2020variational} in
that we consider fat-tailed variational approximations $q(z \mid x)$ rather
than priors $p(z)$. Although \citep{abiri2020variational} also considers
a StudentT approximate posterior, our work involves a more general
variational family which use normalizing flows.
% Relaxation of priors to heavy-tailed distributions has numerous
% applications beyond VAEs. In \cite{silnova2018fast}, the authors perform
% inference in heavy-tailed probabilistic linear discriminant analysis
% using Gaussian mean-field variational inference and show improved
% accuracy in speaker identification. Our work is complementary to these approaches;
% whereas they consider heavy-tailed priors $p(z)$ we consider heavy-tailed
% variational families $q(z \mid x)$.
Similarly, although \citep{wang2018variational} also deals with fat-tails in variational inference,
their goal is to improve $\alpha$-divergence VI by controlling the moments of importance
sampling ratios (which may be heavy-tailed). Our work here adopts
Kullback-Leibler divergence and is concerned with enriching the variational family
to include anisotropic fat-tailed distributions.
More directly comparable recent work \citep{ding2011t,futami2017expectation} studies the $t$-exponential family
variational approximation which includes StudentTs and other
heavier-tailed densities. Critically, the selection of their parameter $t$ (directly related to the
StudentT's degrees of freedom $\nu$), and the issue of tail anisotropy, are not discussed. %Other differences include their derivation of
%expectation-propagation update equations while 
% We directly backpropagate a noisy
% ELBO estimate, and introduce a richer variational family used by ATAF.

\vspace{-1mm}
\paragraph{Flow based methods.}

Normalizing flows and other flow based methods have a rich history within variational
inference \citep{kingma2016improved,rezende2015variational,agrawal2020advances,webb2019improving}.
Consistent with our experience (\Cref{fig:blr-anisotropic}), \citep{webb2019improving}
documents normalizing flows can offer improvements over ADVI and NUTS across thirteen different
Bayesian linear regression models from \citep{gelman2006data}.
\citet{agrawal2020advances} shows that normalizing flows compose nicely with other
advances in black-box VI (e.g., stick the landing, importance weighting).
However, none of these works treat the issue of fat-tailed targets and inappropriate tail
decay.
To our knowledge, only TAFs \citep{jaini2020tails} explicitly consider flows with tails
heavier than Gaussians. Our work here can be viewed as a direct improvement of \citet{jaini2020tails},
and we make extensive comparison to this work throughout the body of this paper. At
a high level, we provide a theory for fat-tails which is sensitive to the rate of
tail decay and develop a framework to characterize and address the tail-isotropic limitations plaguing
TAFs.

\vspace{-2mm}
\section{Flow-based methods for fat-tailed variational inference}

%Continuing from the notation introduced in \Cref{sec:intro}, in this section we
%finish establishing the VI setting and definitions/notation
%\michael{I'm not sure what ``Continuing ...'' means.  Can we say ``In this section we do such-and such ...".  Maybe something like:
%``In this section, we provide background on flow-vased methods for VI, and we describe some of the issues that arise when dealing with fat-tailed data and fat-tailed VI that will be important for understanding our main results. 
%}

\vspace{-2mm}
\subsection{Flow-based VI methods} 

The objective of VI is to approximate a target distribution $\pi(x)$ by searching over
a \emph{variational family} $\cQ = \{q_\phi : \phi \in \Phi\}$ of probability distributions $q_\phi$.
While alternatives exist \citep{li2016variational,wang2018variational}, VI typically
seeks to find $q_\phi$ ``close'' to $\pi$ as measured by Kullback-Leibler divergence $\infdiv{q_\phi}{\pi}$.
To ensure tractability without sacrificing generality, in practice \citep{wingate2013automated,ranganath2014black}
a Monte-Carlo approximation of the evidence lower bound (ELBO) is maximized:
\begin{align*}
  %-\infdiv{q_\phi}{\pi}
  %\propto 
  \text{ELBO}(\phi)
  &= \int q_\phi(x) \log \frac{\bar\pi(x)}{q_\phi(x)} dx\\
  & \approx \frac{1}{n} \sum_{i=1}^n \log \frac{\bar\pi(x_i)}{q_\phi(x_i)},\;
  x_i \simiid q_\phi,\;
  \bar{\pi} \propto \pi
\end{align*}
To summarize, this procedure enables tractable black-box VI
by replacing $\pi$ with $\bar\pi \propto \pi$ and approximating expectations with respect to $q_\phi$ (which are tractable only in simple variational families) through Monte-Carlo approximation. In Bayesian inference and probabilistic programming applications, the target posterior
$\pi(x) = p(x \mid y) = \frac{p(x, y)}{p(y)}$ is typically intractable but
$\bar\pi(x) = p(x,y)$ is computable (i.e. represented by the probabilistic program's
generative / forward execution).

\begin{table*}
  \centering
  \begin{tabular}{ccc}
    \toprule
    Model                                  & Autoregressive transform                                                                              & Suff. conditions for Lipschitz         \\
    \midrule
    NICE\citep{dinh2014nice}               & $z_j + \mu_j \cdot \mathds{1}_{k \not \in [j]}$                                                       & $\mu_j$ Lipschitz                      \\
    MAF\citep{papamakarios2017masked}      & $\sigma_j z_j + (1 - \sigma_j) \mu_j$                                                                 & $\sigma_j$ bounded                     \\
    IAF\citep{kingma2016improved}          & $z_j \cdot \exp(\lambda_j) + \mu_j$                                                                   & $\lambda_j$ bounded, $\mu_j$ Lipschitz \\
    Real-NVP\citep{dinh2016density}        & $\exp(\lambda_j \cdot \mathds{1}_{k \not\in[j]}) \cdot z_j + \mu_j \cdot \mathds{1}_{k \not \in [j]}$ & $\lambda_j$ bounded, $\mu_j$ Lipschitz \\
    Glow\citep{kingma2018glow}             & $\sigma_j \cdot z_j + \mu_j \cdot \mathds{1}_{k \not\in [j]}$                                         & $\sigma_j$ bounded, $\mu_j$ Lipschitz  \\
    NAF\citep{huang2018neural}             & $\sigma^{-1}(w^\top \cdot \sigma(\sigma_j z_j + \mu_j))$                                              & Always \par (logistic mixture CDF)          \\
    NSF\citep{durkan2019neural}            & $z_j \mathds{1}_{z_j \not\in [-B,B]} + M_j(z_j;z_{<j}) \mathds{1}_{x_j \in [-B,B]}$         & Always \par (linear outside $[-B,B]$)     \\
    FFJORD\citep{grathwohl2018ffjord} & n/a (not autoregressive)                                                                              & Always \par (required for invertibility)    \\
    ResFlow\citep{chen2019residual} & n/a (not autoregressive)                                                                              & Always \par (required for invertibility)    \\
    \bottomrule
  \end{tabular}
  \caption{Some popular / recently developed flows, the autoregressive transform used in the flow (if applicable),
  and sufficient conditions conditions for Lipschitz-continuity. A subset of this table was first presented
  in \citep{jaini2020tails}. $M(\cdot)$ denotes monotonic rational quadratic splines \citep{durkan2019neural}.
  \vspace{-5mm}
  }
  \label{tab:flows}
\end{table*}

While it is possible to construct a variational family $\cQ$ tailored to a specific task, we are interested in VI methods which are more broadly applicable and convenient to use: $\cQ$ should be automatically constructed from introspection of a given probabilistic model/program.
Automatic differentiation variational inference (ADVI) \citep{kucukelbir2017automatic} is an early implementation of automatic VI and it is still the default in certain probabilistic programming languages \citep{carpenter2017stan}.
ADVI uses a Gaussian base distribution $\mu$ and a transport map $f_\theta = f \circ \Phi_\text{Affine}$ comprised of an invertible affine transform composed with a deterministic transformation $f$ from $\RR$ to the target distribution's support (e.g., $\exp : \RR \to \RR_{\geq 0}$, $\text{sigmoid} : \RR \to [0,1]$).
As Gaussians are closed under affine transformations, ADVI's representational capacity is limited to deterministic transformations of Gaussians. Hence it cannot represent complex multi-modal distributions.
To address this, more recent work \citep{kingma2016improved,webb2019improving} replaces the affine map $\Phi_\text{Affine}$ with a flow $\Phi_{\text{Flow}}$ typically parameterized by an invertible neural network:
\begin{definition}
    \label[definition]{def:advi}
  ADVI (with normalizing flows) comprise the variational family
  $\cQ_\text{ADVI}~\coloneqq~\{
    (f~\circ~\Phi_\text{Flow})_\ast \mu
    \}$ 
  where $\mu = \text{Normal}(0_d, I_d)$,
  $\Phi_\text{Flow}$ is an invertible flow transform (e.g., \Cref{tab:flows})
  and $f$ is a deterministic bijection between constrained supports \citep{kucukelbir2017automatic}.
\end{definition}

As first noted in \citep{jaini2020tails}, the pushforward of a light-tailed Gaussian base distribution under a Lipschitz-continuous flow will remain light-tailed and provide poor approximation to fat-tailed targets. 
Despite this, 
%at the time of writing this paper 
many major probabilistic programming packages still make a default choice of Gaussian base distribution (\texttt{AutoNormalizingFlow}/\texttt{AutoIAFNormal} in Pyro \citep{bingham2019pyro}, \texttt{method=variational} in Stan \citep{carpenter2017stan}, \texttt{NormalizingFlowGroup} in PyMC \citep{patil2010pymc}).
%\michael{What is the verb in the following sentence}
To address this issue, tail-adaptive flows \citep{jaini2020tails} use a
base distribution $\mu_\nu = \prod_{i=1}^d \text{StudentT}(\nu)$
where a single degrees-of-freedom $\nu \in \RR$ is used across all $d$ dimensions. More precisely,
\begin{definition}
  \label[definition]{def:taf}
  Tail adaptive flows (TAF) comprise the variational family
  $\cQ_\text{TAF}
    \coloneqq \{
    (f \circ \Phi_\text{Flow})_\ast \mu_\nu
    \}$
  where $\mu_\nu = \prod_{i=1}^d \text{StudentT}(\nu)$ with $\nu$ shared across all $d$ dimensions,
  $\Phi_{\text{Flow}}$ is an invertible flow,
  and $f$ is a bijection between constrained supports \citep{kucukelbir2017automatic}.
  During training, the shared degrees of freedom $\nu$ is treated as an additional variational parameter.
\end{definition}

\vspace{-2mm}
\subsection{Fat-tailed variational inference}

\vspace{-1mm}
Fat-tailed variational inference (FTVI)
% \michael{I'm inclined to say that we should have ``(FTVI)'' and then refer to that acronym everywhere; then we can say: VI is important, FTVI is very important, ADVI did a good job with FTVI, we do a great job with FTVI, and you the reader should follow up on us and do FTVI.  It's the sort of acronym that is nice to remember.}
% will do, and add this discussion to future directions in conclusion
considers the setting where the target $\pi(x)$ is fat-tailed. 
Such distributions commonly arise during a standard
``robustification'' approach where light-tailed noise distributions are
replaced with fat-tailed ones \citep{tipping2005variational}. They also
appear when weakly informative prior distributions are used in Bayesian
hierarchical models \citep{gelman2006prior}.

To formalize these notions of fat-tailed versus light-tailed distributions, a quantitative classification for tails is required.
While prior work classified distribution tails according to quantiles and the existence of moment generating functions
\citep[Section 3]{jaini2020tails}, here we propose a more natural and finer-grained classification based upon
the theory of concentration functions \citep[Chapter 1.2]{ledoux2001concentration} which is sensitive to
the rate of tail decay.

\begin{definition}[Classification of tails]
    \label[definition]{def:tail-classification}
    For each $\alpha,p > 0$, we let 
    \vspace{-1mm}
    \begin{itemize}[leftmargin=*]
        \item $\mathcal{E}_\alpha^p$ denote the set of \emph{exponential-type} random variables $X$ with $\mathbb{P}(|X| \geq x) = \Theta(e^{-\alpha x^p})$;
    \vspace{-1mm}
        \item $\mathcal{L}_\alpha^p$ denote the set of \emph{logarithmic-type} random variables $X$ with $\mathbb{P}(|X| \geq x) = \Theta(e^{-\alpha(\log x)^p})$.
    \end{itemize}
    \vspace{-1mm}
    % A random variable $X$ is of %\textbf{exponential-type}, denoted $X \in \cE_\alpha^p$, w
    % \begin{description}
    %     \item[exponential-type], denoted $X \in \cE_\alpha^p$,
    %   whenever $\PP(\lvert X \rvert \geq x) = \Theta(e^{-\alpha x^p})$, and
    %     \item[logarithmic-type], denoted $X \in \cL^p_\alpha$, whenever $\PP(\lvert X \rvert \geq x) = \Theta(e^{-\alpha (\log x)^p})$.
    % \end{description}
    In both cases, we call $p$ the \emph{class index} and $\alpha$ the \emph{tail parameter} for $X$.
    Note that every $\cE_\alpha^p$ and $\cL_\beta^q$ are disjoint, that is,
    $\cE_\alpha^p \cap \cL_\beta^q = \emptyset$ for all $\alpha,\beta,p,q > 0$.
    For brevity, we define the ascending families
    $\overline{\cE_\alpha^p}$ and $\overline{\cL_\alpha^p}$
    analogously as before except with $\Theta(\cdot)$ replaced
    by $\cO(\cdot)$. Similarly, we denote the class of distributions with exponential-type
    tails with class index at least $p$
    by $\overline{\cE^p} = \cup_{\alpha \in \RR_+} \overline{\cE_\alpha^p}$, and similarly for $\overline{\cL^p}$.
\end{definition}

For example, $\overline{\cE_\alpha^2}$ corresponds to $\alpha^{-1/2}$-sub-Gaussian random variables,
    $\overline{\cE_\alpha^1}$ corresponds to sub-exponentials, and (of particular relevance to this paper) $\cL^1_\alpha$ corresponds to the class of power-law distributions.

\vspace{-2mm}
\section{Tail behavior of Lipschitz flows}

\vspace{-1mm}
This section states our main theoretical contributions; proofs are deferred to \Cref{sec:proofs}.
We sharpen previous impossibility results approximating fat-tailed targets
using light-tailed base distributions \citep[Theorem 4]{jaini2020tails}
by characterizing the effects of Lipschitz-continuous transport maps on not only the tail class
but also the class index and tail parameter (\Cref{def:tail-classification}). Furthermore, we extend the theory
to include polynomial flows \citep{jaini2019sum}. For the multivariate setting,
we define the tail-parameter function (\Cref{def:mv-tail-param}) to help formalize the notion
of tail-isotropic distributions and prove a fundamental limitation that tail-isotropic
pushforwards remain tail-isotropic (\Cref{prop:isotropic-pushforward}).

Most of our results are developed within the context of Lipschitz-continuous transport maps $f_\theta$.
In practice, many flow-based methods exhibit Lipschitz-continuity in their transport map either by design \citep{grathwohl2018ffjord,chen2019residual}, or as a consequence of choice of architecture and activation function (\Cref{tab:flows}). % or by being enforced stemming from the Banach fixed point theorem 
%
%or .
The following assumption encapsulates this premise.
\begin{assumption}\label[assumption]{assump:lipschitz}
    $f_\theta$ is invertible, and both $f_\theta$ and $f^{-1}_\theta$
    are $L$-Lipschitz continuous (e.g., sufficient conditions in \Cref{tab:flows} are satisfied).
\end{assumption}
It is worth noting that domains other than $\mathbb{R}^d$ may require an additional bijection between supports (e.g. $\exp : \mathbb{R} \to \mathbb{R}_+$) which
could violate \cref{assump:lipschitz}.

\vspace{-2mm}
\subsection{Closure of tail classes}
\label{ssec:failure}

\vspace{-1mm}
Our first set of results pertain to closure of the tail classes in \Cref{def:tail-classification}
under Lipschitz-continuous transport maps. While earlier work \citep{jaini2020tails} demonstrated
closure of exponential-type distributions $\cup_{p > 0} \overline{\cE^p}$ under flows satisfying \Cref{assump:lipschitz}, our results in Theorem \ref{thm:distn_class_closed}, and Corollaries \ref{corr:heavy_to_light} and \ref{corr:closure_polynomials} sharpen these observations, showing that (1) Lipschitz transport maps cannot decrease the class index $p$ for exponential-type random variables, but can alter the tail parameter $\alpha$; and
(2) under additional assumptions, cannot change either class index $p$ or the tail parameter $\alpha$ for logarithmic-type random variables.
%         tail parameter $\alpha$; 
% \begin{itemize}[leftmargin=*]
%     \item cannot decrease the class index $p$ for exponential-type random variables, but can alter the
%         tail parameter $\alpha$; and
%     \item under additional assumptions, cannot change either class index $p$ or the tail
%         parameter $\alpha$ for logarithmic-type random variables.
% \end{itemize}

\begin{theorem}[Lipschitz maps of tail classes]
  \label{thm:distn_class_closed}
  Under \Cref{assump:lipschitz},
  the distribution classes $\overline{\cE^p}$
  and $\overline{\cL^p_\alpha}$ (with $p,\alpha > 0$) are closed
  under every flow transformation in \Cref{tab:flows}.
\end{theorem}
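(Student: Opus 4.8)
The plan is to reduce the whole statement to a one-dimensional comparison of survival functions, exploiting that both $\overline{\cE^p}$ and $\overline{\cL^p_\alpha}$ are defined by $\cO(\cdot)$ (not $\Theta(\cdot)$) bounds, so that closure only requires an \emph{upper} bound on $\PP(|Y| \ge y)$ for $Y = f_\theta(X)$. First I would record the elementary consequence of \Cref{assump:lipschitz}: an $L$-Lipschitz $f_\theta$ obeys the affine bound $|f_\theta(x)| \le L|x| + C$ with $C = |f_\theta(0)|$, whence $\{|Y| \ge y\} \subseteq \{|X| \ge (y-C)/L\}$ and, by monotonicity of survival functions, $\PP(|Y| \ge y) \le \PP(|X| \ge (y-C)/L)$ for $y > C$. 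This is exactly the concentration-function bound of \citep[Chapter 1.2]{ledoux2001concentration}: a Lipschitz pushforward concentrates no worse than its source once the radius is rescaled by $L$. The routine bookkeeping step is then to verify, row by row, that each transform in \Cref{tab:flows} is globally $L$-Lipschitz under its stated sufficient conditions, so that a single constant $L$ (and $C$) is available.

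For the exponential class, suppose $X \in \overline{\cE^p}$, say $\PP(|X| \ge x) = \cO(e^{-\alpha x^p})$. Choosing $y$ large enough that $(y-C)/L \ge y/(2L)$ and substituting into the comparison gives $\PP(|Y| \ge y) = \cO\bigl(e^{-\alpha (y/(2L))^p}\bigr) = \cO(e^{-\alpha' y^p})$ with $\alpha' = \alpha(2L)^{-p}$. Hence $Y \in \overline{\cE^p_{\alpha'}} \subseteq \overline{\cE^p}$: the class index $p$ survives while the tail parameter is merely rescaled, which is precisely why the theorem asserts closure of $\overline{\cE^p}$ rather than of a fixed $\overline{\cE^p_\alpha}$.

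For the logarithmic class the same comparison, read through $U = \log|X|$ and $V = \log|Y|$, turns the affine bound into a \emph{bounded additive shift}, $\log\bigl((y-C)/L\bigr) = \log y - \log L + o(1)$, giving $\PP(|Y| \ge y) = \cO\bigl(e^{-\alpha(\log y - \log L + o(1))^p}\bigr)$. The crux, which I expect to be the main obstacle, is to absorb this shift into the $\cO(\cdot)$ while retaining the \emph{same} tail parameter $\alpha$, i.e. to control $(\log y)^p - (\log y - \log L)^p$. For the power-law case $p = 1$ — the regime of primary interest, where $\overline{\cL^1_\alpha}$ is the class of power laws — this difference is exactly the constant $\log L$, contributing only a factor $L^\alpha$ that $\cO(\cdot)$ absorbs, so $\alpha$ is preserved on the nose; for $p < 1$ the difference even tends to $0$.

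The genuinely delicate case is $p > 1$, where the difference grows like $p(\log L)(\log y)^{p-1} \to \infty$: the crude affine comparison then only yields $Y \in \overline{\cL^p_{\alpha'}}$ for every $\alpha' < \alpha$, and recovering the claimed exact value $\alpha' = \alpha$ requires a sharper estimate than the bare Lipschitz bound (for instance a two-sided concentration-function argument using that $f^{-1}_\theta$ is also $L$-Lipschitz, or additional regularity on the source density). Securing the exact preservation of $\alpha$ uniformly in $p$, together with the per-row Lipschitz verification of \Cref{tab:flows}, is where I anticipate the real work to lie; the heavy-tail content of the theorem ($p \le 1$) follows cleanly from the affine comparison alone.
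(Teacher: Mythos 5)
Your proposal follows essentially the same route as the paper's proof: a one-dimensional comparison of survival functions under a Lipschitz map. Where the paper invokes Proposition 1.3 of Ledoux (concentration function plus a median-recentering step), you use the elementary bound $|f_\theta(x)| \le L|x| + |f_\theta(0)|$; both yield $\PP(|f_\theta(X)| \ge r) = \cO\bigl(\PP(|X| \ge (r-C)/L)\bigr)$, so this is a difference of packaging, not substance. Your treatment of $\overline{\cE^p}$ matches the paper's (the class index survives, the tail parameter degrades harmlessly because $\overline{\cE^p}$ is the ascending union over all tail parameters; the paper's constant $\alpha/L$ should really be $\alpha/L^p$, but this is immaterial for the same reason). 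Your treatment of $\overline{\cL^p_\alpha}$ for $p \le 1$ --- absorbing the additive shift $\log L$ into the $\cO(\cdot)$ --- is exactly the paper's computation, except that you justify the absorption while the paper does not.

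The case you flag as genuinely delicate, $p > 1$, is precisely where the paper's proof is unjustified: it asserts $\cO(e^{-\alpha(\log(r/L))^p}) = \cO(e^{-\alpha(\log r)^p})$ without comment, and as your own estimate shows, the exponent gap $\alpha\bigl[(\log r)^p - (\log r - \log L)^p\bigr] \sim \alpha p (\log L)(\log r)^{p-1}$ diverges when $p>1$ and $L>1$. Moreover, your hope that a sharper two-sided or bilipschitz argument could recover the exact parameter $\alpha$ cannot be realized, because the claim itself fails for $p>1$: take $f_\theta(x) = Lx$ with $L>1$ (an IAF instance from \Cref{tab:flows} with $\lambda_j \equiv \log L$, $\mu_j \equiv 0$, satisfying \Cref{assump:lipschitz}) and $X$ with $\PP(|X| \ge x) = e^{-\alpha(\log x)^p}$ for $x \ge 1$; then $\PP(|LX| \ge r)\,e^{\alpha(\log r)^p} = \exp\bigl(\alpha[(\log r)^p - (\log r - \log L)^p]\bigr) \to \infty$, so $LX \notin \overline{\cL^p_\alpha}$. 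Thus the theorem as stated is correct only for $p \le 1$ --- which covers the power-law case $p=1$ that the rest of the paper actually relies on --- and on that range your proof is complete. In short, you have proved everything in the statement that is true, and correctly isolated (though not quite identified as false) the part that is not; at exactly that point your write-up is more careful than the paper's own proof.
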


Informally, \Cref{thm:distn_class_closed} asserts that light-tailed base distributions cannot be transformed
via Lipschitz transport maps into fat-tailed target distributions.
Note this does not violate universality theorems for certain flows \citep{huang2018neural}
as these results only apply in the infinite-dimensional limit. Indeed, certain exponential-type families (such as Gaussian mixtures) are dense in the class of \emph{all} distributions, including those that are fat-tailed.

Note that $\overline{\cL^p_\alpha} \supset \cE^q_\beta$ for all $p,q,\alpha,\beta$, so \Cref{thm:distn_class_closed}
by itself does not preclude transformations of fat-tailed base distributions to light-tailed targets.
Under additional assumptions on $f_\theta$, we further establish a partial converse that a fat-tailed base distribution's tail parameter is unaffected after pushfoward
hence heavy-to-light transformations are impossible. Note here there is no ascending union over
tail parameters (i.e., $\cL^p_\alpha$ instead of $\overline{\cL^p_\alpha}$).

\begin{corollary}[Closure of $\cL^p_\alpha$]
  \label[corollary]{corr:heavy_to_light}
  If in addition $f_\theta$ is smooth
  with no critical points on the interior or boundary of
  its domain, then $\cL_\alpha^p$ is closed.
\end{corollary}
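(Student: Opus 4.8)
The plan is to split the two-sided estimate defining $\cL_\alpha^p$ into its upper ($\cO$) and lower ($\Omega$) halves and to observe that only the latter carries new content. Since $\cL_\alpha^p \subseteq \overline{\cL_\alpha^p}$ and \Cref{thm:distn_class_closed} already gives closure of $\overline{\cL_\alpha^p}$ under any flow satisfying \Cref{assump:lipschitz}, the pushforward $Y = f_\theta(X)$ of any $X \in \cL_\alpha^p$ automatically obeys $\PP(|Y| \ge x) = \cO(e^{-\alpha(\log x)^p})$. The whole point of the corollary is therefore the matching lower bound $\PP(|Y| \ge x) = \Omega(e^{-\alpha(\log x)^p})$, and this is exactly where the extra smoothness and no-critical-point hypotheses enter, since the Lipschitz estimates underlying \Cref{thm:distn_class_closed} only ever produce one-sided ($\cO$) control.

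The key observation is that \Cref{assump:lipschitz} together with invertibility and the absence of critical points makes $f_\theta$ bi-Lipschitz with nonvanishing Jacobian: there are constants $C, C' > 0$ with $\tfrac1L|x| - C \le |f_\theta(x)| \le L|x| + C'$ for large $|x|$, and the change-of-variables Jacobian $|\det D f_\theta^{-1}|$ in \eqref{eq:change-of-variable} stays bounded away from both $0$ and $\infty$ on the tail. This nonvanishing Jacobian is precisely what the no-critical-point assumption buys, ruling out degenerate bijections such as $x \mapsto x^3$ whose inverse fails to be Lipschitz. First I would use the lower inequality to sandwich the survival function: the inclusion $\{|X| \ge L(x+C)\} \subseteq \{|f_\theta(X)| \ge x\}$ yields $\PP(|Y| \ge x) \ge \PP(|X| \ge L(x+C))$, and since $X \in \cL_\alpha^p$ the right-hand side equals $\Theta(e^{-\alpha(\log(L(x+C)))^p})$. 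In the power-law case $p=1$ — the case of primary interest, where $\cL_\alpha^1$ is the class of power laws — this is immediate, because $\log(L(x+C)) = \log x + \log L + o(1)$ and the constant shift $\log L$ is absorbed into the implicit $\Theta$-constant, giving $\PP(|Y| \ge x) = \Omega(x^{-\alpha})$ and hence $Y \in \cL_\alpha^1$.

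For general $p$ I would run the same sandwich through both bi-Lipschitz inequalities and track the expansion $(\log x + \log L)^p = (\log x)^p + p(\log L)(\log x)^{p-1} + \cdots$, so that the constant argument-rescaling forced by $L$ perturbs only sub-leading terms of the exponent. I expect this to be the main obstacle: for $p=1$ the correction is a harmless constant, but for $p>1$ the term $(\log x)^{p-1}$ is unbounded, so the rescaling cannot be absorbed into a literal multiplicative $\Theta$ on the survival function and the statement must be read at the level of leading-order (log-scale) tail equivalence. Making this precise, and checking that the density/Jacobian comparison in \eqref{eq:change-of-variable} survives composition of the autoregressive blocks in \Cref{tab:flows} so that the global bi-Lipschitz constants remain finite, is the delicate part of the argument.
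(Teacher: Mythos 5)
Your proposal is correct and its essential mechanism is the same as the paper's: the extra smoothness and no-critical-point hypotheses serve precisely to make the inverse map Lipschitz (equivalently, to give $f_\theta$ at-least-linear growth), so that the missing lower ($\Omega$) half of the $\Theta$-estimate transfers from $X$ to $f_\theta(X)$, the upper ($\cO$) half being already supplied by \Cref{thm:distn_class_closed}. The paper executes this more tersely: it uses the inverse function theorem to get $\inf_x |(f_\theta)'(x)| \geq M > 0$ (the proof writes $\sup$, evidently a typo), concludes that $f_\theta^{-1}$ is $M^{-1}$-Lipschitz, and then simply ``applies \Cref{thm:distn_class_closed}'' --- implicitly to $f_\theta^{-1}$ --- which is exactly what your explicit event inclusion $\{|X| \geq L(x+C)\} \subseteq \{|f_\theta(X)| \geq x\}$ spells out in elementary form; note that the density-level Jacobian control you mention is not actually needed, only the growth sandwich. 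The one place where your write-up is more careful than the paper is the obstacle you flag for $p > 1$: absorbing the multiplicative rescaling $x \mapsto x/L$ into a $\Theta$ or $\cO$ constant is legitimate only for $p \leq 1$, since for $p > 1$ the ratio $e^{-\alpha(\log(x/L))^p}/e^{-\alpha(\log x)^p} = e^{\alpha\left((\log x)^p - (\log x - \log L)^p\right)}$ grows like $e^{\alpha p (\log L)(\log x)^{p-1}}$, which is unbounded when $L > 1$. The paper's own proofs (of the logarithmic case of \Cref{thm:distn_class_closed}, and hence of this corollary) make this absorption silently, so your caveat --- exact closure for $p = 1$, only leading-order (log-scale) equivalence for $p > 1$ --- applies equally to the published argument; for the power-law class $\cL^1_\alpha$ that the paper actually relies on downstream (e.g., in \Cref{prop:isotropic-pushforward}), both your argument and the paper's are complete.
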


This implies that simply fixing a fat-tailed base
distribution \emph{a priori} is insufficient; the tail-parameter(s) of the base distribution must be explicitly optimized alongside
the other variational parameters during training.
While these additional assumptions may seem restrictive, note that many flow transforms
explicitly enforce smoothness and monotonicity \citep{wehenkel2019unconstrained,huang2018neural,durkan2019neural}
and hence satisfy the premises. In fact, we can show a version of \Cref{thm:distn_class_closed} ensuring closure of exponential-type
distributions under polynomial transport maps which do not satisfy \Cref{assump:lipschitz}.
This is significant because it extends the closure results to
include polynomial flows such as sum-of-squares flows \citep{jaini2019sum}.

\begin{corollary}[Closure under polynomial maps]
    \label[corollary]{corr:closure_polynomials}
  For any $\alpha, \beta, p, q \in \RR_+$, there does not exist a
  finite-degree polynomial map from $\cE_\alpha^p$ into $\cL_\beta^q$.
\end{corollary}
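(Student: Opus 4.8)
The plan is to reduce the statement to a comparison of tail decay rates, exploiting the single structural fact that a polynomial of degree $d$ grows exactly like $|x|^d$ at infinity. Fix $X \in \cE_\alpha^p$ and a polynomial $f$, and set $d = \deg f$. A constant map ($d = 0$) sends $X$ to a degenerate variable whose tail is eventually $0$, hence not $\Theta(e^{-\beta(\log y)^q})$, so assume $d \ge 1$. Letting $a_d \ne 0$ be the leading coefficient, the triangle inequality applied to $f(x) = a_d x^d + \cdots + a_0$ yields constants $c > 0$ and $R_0$ with $|f(x)| \le c\,|x|^d$ for all $|x| \ge R_0$. The key observation is that such polynomial growth can only divide the class index $p$ by $d$, leaving it strictly positive, so the pushforward tail stays exponential-type and can never reach the (heavier) logarithmic regime.

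The crucial simplification is that only a one-sided bound is needed. Since $|f|$ is bounded by some $M$ on $\{|x| \le R_0\}$, for every $y > M$ with $(y/c)^{1/d} \ge R_0$ the inclusion $\{|f(X)| \ge y\} \subseteq \{|X| \ge (y/c)^{1/d}\}$ holds; indeed $|f(x)| \ge y$ together with $|x| \ge R_0$ forces $c|x|^d \ge y$. Substituting the defining tail estimate $\PP(|X| \ge r) = \Theta(e^{-\alpha r^p})$ of $X \in \cE_\alpha^p$ then gives, for all large $y$,
\begin{equation*}
  \PP(|f(X)| \ge y) \;\le\; \PP\!\left(|X| \ge (y/c)^{1/d}\right) \;=\; \Theta\!\left(e^{-\alpha (y/c)^{p/d}}\right) \;=\; \cO\!\left(e^{-\alpha' y^{p/d}}\right),
\end{equation*}
with $\alpha' = \alpha\, c^{-p/d} > 0$ and transformed class index $p/d > 0$. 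Thus $f(X)$ has an exponential-type \emph{upper} tail of positive index.

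Finally I would conclude by the elementary growth ordering $y^{p/d}/(\log y)^q \to \infty$, valid for every $p/d, q > 0$: membership $f(X) \in \cL_\beta^q$ would require $\PP(|f(X)| \ge y) \ge c'' e^{-\beta(\log y)^q}$ for large $y$ (the $\Omega$ half of the $\Theta$ in \Cref{def:tail-classification}), whereas the display forces $\PP(|f(X)| \ge y) \le C e^{-\alpha' y^{p/d}} = o\!\left(e^{-\beta(\log y)^q}\right)$, a contradiction. Hence no polynomial carries $\cE_\alpha^p$ into $\cL_\beta^q$, and notably the argument never invokes \Cref{assump:lipschitz}, which is precisely why it covers non-Lipschitz polynomial flows. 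The one point requiring care --- and the main obstacle to a naive approach --- is the temptation to prove the sharper claim $f(X) \in \cE_{\alpha'}^{p/d}$ and invoke disjointness directly; this sharp $\Theta$ can genuinely fail (for Gaussian $X$ and $f(x)=x^2$ one gets the $\chi^2_1$ tail $\Theta(y^{-1/2}e^{-y/2})$, whose polynomial prefactor breaks exact membership), so the proof must rest only on the $\cO$ upper bound above, which suffices and sidesteps the delicate two-sided asymptotics of the crossing points $f^{-1}(\pm y)$.
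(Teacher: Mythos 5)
Your proof is correct, and it rests on the same core idea as the paper's --- a degree-$d$ polynomial can only divide the class index, sending $p$ to $p/d>0$, so the pushforward tail remains of exponential type and therefore cannot be of logarithmic type --- but your execution is genuinely different, and in fact more careful. The paper reduces to monomials (``it suffices to consider monomials $Y=X^k$ since leading powers dominate''), uses the substitution $\PP(Y\ge x)=\PP(X\ge x^{1/k})=\Theta(e^{-\alpha x^{p/k}})$ to claim exact membership $Y\in\cE_\alpha^{p/k}$, and concludes by the disjointness $\cE_\alpha^p\cap\cL_\beta^q=\emptyset$ recorded in \Cref{def:tail-classification}. That substitution does preserve the two-sided $\Theta$ for a true monomial, but the reduction step is informal: once a leading coefficient and lower-order terms are present, one only gets a sandwich $(1-\eps)\lvert a_d\rvert\,\lvert x\rvert^d\le\lvert f(x)\rvert\le(1+\eps)\lvert a_d\rvert\,\lvert x\rvert^d$ for large $\lvert x\rvert$, which pins the exponential coefficient only between $\alpha\bigl((1\pm\eps)\lvert a_d\rvert\bigr)^{-p/d}$ and need not yield exact membership in any single $\cE_{\alpha'}^{p/d}$ --- precisely the fragility you flag. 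Your argument sidesteps this entirely: you use only the one-sided bound $\lvert f(x)\rvert\le c\lvert x\rvert^d$, deduce $\PP(\lvert f(X)\rvert\ge y)=\cO(e^{-\alpha' y^{p/d}})$, and contradict the lower-bound half of $\cL_\beta^q$ membership via $y^{p/d}/(\log y)^q\to\infty$; in effect you prove the slightly stronger fact that the ascending family $\overline{\cE^{p/d}}$ is disjoint from every $\cL_\beta^q$, which is what a fully rigorous version of the paper's argument also needs, since the disjointness stated in \Cref{def:tail-classification} covers only the exact classes. You also handle the degenerate constant case explicitly, and you correctly note the argument never uses \Cref{assump:lipschitz}. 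One small caveat: your Gaussian/$\chi^2_1$ illustration does not literally refute the paper's monomial step, because a standard Gaussian is itself not exactly in $\cE_{1/2}^2$ (its tail carries an $x^{-1}$ prefactor, so the hypothesis of that step already fails); but the structural lesson it encodes --- do not rely on exact $\Theta$ membership surviving a pushforward --- is exactly the right one, and your $\cO$-only route is the more robust for it.
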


% Suffices to show for k \in \RR_{> 0}, this covers inverse powers

% \begin{remark}
%   There does not exist a inverse polynomial map (e.g., sqrt) from $\cL_\alpha$ to $\cE$.
% \end{remark}

\vspace{-2mm}
\subsection{Multivariate fat-tails and anisotropic tail adaptive flows}

\vspace{-1mm}
Next, we restrict attention to power-law tails $\cL^1_\alpha$ and develop a multivariate fat-tailed theory and notions of isotropic/anisotropic tail indices. Using our theory, we prove that both ADVI and TAF are fundamentally limited because they
are only capable of fitting tail-isotropic target measures (\Cref{prop:isotropic-pushforward}).
We consider anisotropic tail adaptive flows (ATAF): a density
modeling method which can represent tail-anisotropic distributions (\Cref{remark:anisotropic}).

% Karamata's theorem enables recovery of a regularly varying random variable's
% index: $X \in \cL^1_\alpha$ means that $\lim_{x \to \infty} \frac{x \PP(X \geq
%     x)}{\int_x^\infty \PP(X \geq t) dt} = \alpha$ for $\alpha < -1$.
% Motivated by Karamata's theorem, we introduce
% the following definition to aid in describing multivariate fat-tailed random variables:

For example, consider the target distribution shown earlier in \Cref{fig:pancake} formed as the product of $\mathcal{N}(0,1)$ and $\text{StudentT}(\nu=1)$ distributions.
The marginal/conditional distribution along a horizontal slice (e.g., the distribution of $\braket{X,e_0}$)
is fat-tailed, while along a vertical slice (e.g., $\braket{X,e_1}$) it is Gaussian.
Another extreme example of tail-anisotropy where the tail parameter for
$\braket{X,v}$ is different in every direction $v \in \cS^{1}$
is given in \Cref{fig:radial-fat-tail}. Here $\mathcal{S}^{d-1}$ denotes the $(d-1)$-sphere in $d$ dimensions. 
Noting that the tail parameter depends on the choice of direction, we are motivated to consider
the following direction-dependent definition of multivariate tail parameters. 

\begin{definition}
  \label[definition]{def:mv-tail-param}
  For a $d$-dimensional random vector $X$,
  its \emph{tail parameter function} $\alpha_X : \cS^{d-1} \to \bar{\RR}_+$
  is defined as
  % $\alpha_X(v) = \lim_{x \to \infty} \frac{x \PP(\braket{v,X} \geq x)}{\int_x^\infty \PP(\braket{v,X} \geq t) dt}$.
  $\alpha_X(v) = -\lim_{x \to \infty} \log \PP(\braket{v,X} \geq x) / \log x$ when the limit exists, and $\alpha_X(v) = +\infty$ otherwise.
  In other words, $\alpha_X(v)$ maps directions $v$ into the tail parameter of the corresponding one-dimensional projection $\braket{v,X}$. The random vector $X$ is \emph{tail-isotropic} if $\alpha_X(v) \equiv c$ is constant and
  \emph{tail-anisotropic} if $\alpha_X(v)$ is not constant but bounded.
\end{definition}

\begin{figure*}[htbp]
  \centering
  \includegraphics{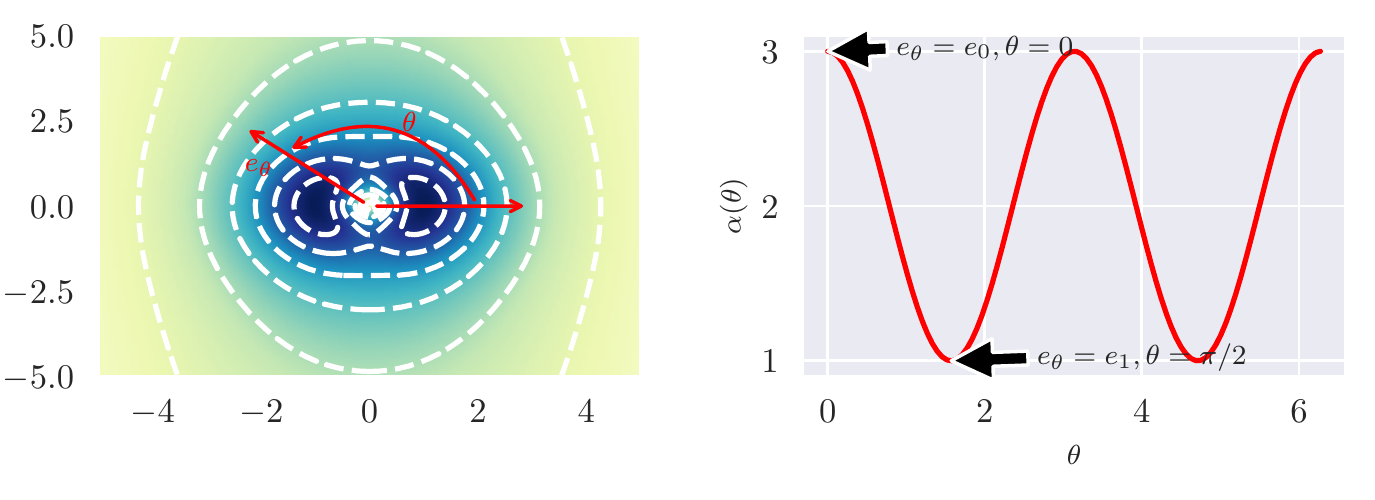}%
%   \includegraphics[trim={0 0 7cm 0},clip]{Figures/radial-fat-tail.pdf}\\%
%   \vspace{-8mm}
%   \includegraphics[trim={7.2cm 0 0 0},clip]{Figures/radial-fat-tail.pdf}%
  %\includegraphics[width=\textwidth]{Figures/radial-fat-tail.png}%
  %\input{Figures/radial-fat-tail.pgf}
  \vspace{-7mm}
  \caption{
    Illustration of the direction-dependent tail-parameter function (bottom) on a tail-anisotropic distribution (top)
    with PDF $\dd P(r,\theta) = r^{-\alpha(\theta)} r \dd r \dd\theta$ and tail parameter $\alpha(\theta) = 2 + \cos(2\theta)$.
    While prior fat-tailed theory based on $\|X\|_2 = \sup_{\|v\|_2 = 1} \braket{X,v}$
    is only sensitive to the largest tail parameter $\max_{\theta \in [0, 2\pi]} \alpha(\theta) = 3.0$,
    our direction-dependent tail parameter function (bottom, red line)
    and its values along the standard basis axes ($\alpha(0)$ and $\alpha(\pi/2)$)
    capture \emph{tail-anisotropy}.
    \vspace{-5mm}
  }
  \label{fig:radial-fat-tail}
\end{figure*}
Of course, one can construct pathological densities where this definition is not effective
(see \Cref{eg:spiral}), but it will suffice for our purposes. 
It is illustrative to contrast with the theory presented for TAF \citep{jaini2020tails}
where only the tail exponent of $\|X\|_2$ is considered.
For $X = (X_1, \ldots, X_d)$ with $X_i \in \cL^1_{\alpha_i}$, 
by Fatou-Lebesgue and \Cref{lem:sum-rule}
\begin{align*}
  &\PP[\|X\|_2 \geq t]
  = \PP\left[\sup_{z \in \cS^{d-1}} \braket{X,z} \geq t\right]\\
  &\quad \geq \sup_{z \in \cS^{d-1}} \PP[\braket{X,z} \geq t]
  = \max_{1 \leq i \leq d} \nu_i
  = \max_{0 \leq i \leq d-1} \alpha_X(e_i)
\end{align*}
Therefore, considering only the tail exponent of $\|X\|_2$ is equivalent to summarizing $\alpha_X(\cdot)$ by an upper bound.
Given the absence of the tail parameters for other directions (i.e., $\alpha_X(v) \neq \sup_{\|v\|=1} \alpha_X(v)$)
in the theory for TAF \citep{jaini2020tails}, it should be unsurprising that both their multivariate 
theory as well as their experiments only consider tail-isotropic distributions obtained either
as an elliptically-contoured distribution with fat-tailed radial distribution or 
$\prod_{i=1}^d \text{StudentT}(\nu)$ (tail-isotropic by \Cref{lem:sum-rule}). 
Our next proposition shows that this presents a significant limitation when the target distribution is
tail-anisotropic.

\begin{proposition}[Pushforwards of tail-isotropic distributions]
  \label[proposition]{prop:isotropic-pushforward}
  Let $\mu$ be tail isotropic with non-integer parameter $\nu$
  and suppose $f_\theta$ satisfies \Cref{assump:lipschitz}.
  Then $(f_\theta)_\ast \mu$ is tail isotropic with parameter $\nu$.
\end{proposition}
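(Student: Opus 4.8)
The plan is to prove that the tail-parameter function of $Y := f_\theta(X)$ (with $X \sim \mu$) satisfies $\alpha_Y(w) = \nu$ for every $w \in \cS^{d-1}$, by sandwiching the one-dimensional tail $\PP(\braket{w,Y} \geq t)$ between two power laws sharing the exponent $\nu$. The workhorse is that, since $f_\theta$ and $f_\theta^{-1}$ are $L$-Lipschitz (\Cref{assump:lipschitz}), the radii $\|Y\|$ and $\|X\|$ are comparable up to a factor $L$ and an additive constant: $\tfrac1L\|X\| - c \leq \|Y\| \leq L\|X\| + c$, where $c = \max(\|f_\theta(0)\|, \|f_\theta^{-1}(0)\|)$.

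First I would reduce everything to a radial statement about the base distribution. Since $\mu$ is tail-isotropic, every projection $\braket{e_i,X}$ and $\braket{-e_i,X}$ has tail parameter $\nu$, so each $|X_i| \in \cL^1_\nu$; applying the sum rule (\Cref{lem:sum-rule}) to $\|X\|_2 \leq \sqrt{d}\,\max_i |X_i|$ together with the trivial bound $\|X\|_2 \geq |X_1|$ shows the radius $\|X\|$ itself has tail parameter $\nu$. Transporting this through the two-sided Lipschitz comparison above gives that $\|Y\|$ also has tail parameter $\nu$. The easy half of the sandwich follows immediately: $\braket{w,Y} \leq \|Y\|$ forces $\PP(\braket{w,Y} \geq t) \leq \PP(\|Y\| \geq t) = t^{-\nu + o(1)}$, hence $\alpha_Y(w) \geq \nu$ in every direction.

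The substance is the matching lower bound $\alpha_Y(w) \leq \nu$, i.e.\ that no direction of $Y$ can be tail-thinner than its radius; this is exactly the assertion that bi-Lipschitz transport cannot funnel an isotropic tail into a lower-dimensional set. Here I would argue geometrically: set $x_t := f_\theta^{-1}(2tw)$, note $\|x_t\| \asymp t$ by the Lipschitz bounds, and use $L$-Lipschitzness of $f_\theta$ to verify the ball containment $B(x_t, t/L) \subseteq \{x : \braket{w, f_\theta(x)} \geq t\}$. Indeed, for $x$ in this ball, $\braket{w, f_\theta(x)} \geq \braket{w, 2tw} - L\|x - x_t\| \geq 2t - t = t$, using $f_\theta(x_t) = 2tw$ and $\|w\| = 1$. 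Consequently $\PP(\braket{w,Y} \geq t) \geq \PP(X \in B(x_t, t/L))$, and it remains to show this ball --- centered at radius $\asymp t$ with radius $\asymp t$ --- captures mass $t^{-\nu + o(1)}$.

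I expect this last ball-mass estimate to be the main obstacle, because tail-isotropy as defined (\Cref{def:mv-tail-param}) only constrains one-dimensional marginals, whereas lower-bounding a ball probability is a genuinely multivariate statement. This is precisely where the non-integer hypothesis on $\nu$ enters: it lets me invoke multivariate regular variation, under which an isotropic index-$\nu$ base possesses a limit measure charging every open cone, so that a ball at radius $\asymp t$ of comparable radius receives mass of order $t^{-\nu - d} \cdot t^{d} = t^{-\nu}$; non-integrality rules out the exceptional cancellation at integer indices under which a projection's tail could be strictly lighter than the radius. Combining the two bounds yields $\alpha_Y(w) = \nu$ for all $w$, i.e.\ $(f_\theta)_\ast \mu$ is tail-isotropic with parameter $\nu$.
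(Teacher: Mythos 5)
Your overall strategy is the same as the paper's: the upper half of the sandwich comes from Lipschitz closure (the paper applies \Cref{thm:distn_class_closed} to the Lipschitz scalar maps $x \mapsto \braket{v, f_\theta(x)}$; you route through $\|X\|$, $\|Y\|$ and \Cref{lem:sum-rule}, which is equally fine), and the lower half is supposed to come from multivariate regular variation, which is exactly where the paper invokes Theorem C.2.1 of Buraczewski et al.\ using non-integrality of $\nu$. Your ball containment $B(x_t, t/L) \subseteq \{x : \braket{w, f_\theta(x)} \geq t\}$ is correct, and is in fact a cleaner geometric step than the paper's cone/half-space manipulations. The genuine gap is the final step: the claim that tail-isotropy plus non-integer $\nu$ yields a limit measure ``charging every open cone.'' That statement is false. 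Non-integrality (via Basrak--Davis--Mikosch, i.e.\ Theorem C.2.1) buys the \emph{existence} of the MRV limit measure; it says nothing about its \emph{support}. Tail-isotropy in the sense of \Cref{def:mv-tail-param} constrains only half-space masses ($C(v) > 0$ for all $v$), and these can all be positive while the spectral measure sits on finitely many rays.

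Concretely: in $\RR^3$ let $X = T e_J$ with $J$ uniform on $\{1,2,3\}$ and $T \sim \mathrm{StudentT}(\nu)$, $\nu$ non-integer. Every projection satisfies $\PP(|\braket{v,X}| > x) \sim \tfrac{C_T}{3}\left(\sum_i |v_i|^\nu\right) x^{-\nu}$ with a strictly positive constant, so the hypotheses of the proposition (and of the paper's appendix theorem) hold, yet the limit measure is supported on the coordinate axes. Take $f_\theta = \mathrm{id}$ (so $L=1$) and $w = (1,1,1)/\sqrt{3}$: your ball $B(2tw, t)$ lies at distance $2t\sqrt{2/3} - t > 0$ from the support of $X$, so $\PP(X \in B(x_t, t/L)) = 0$ for every $t$ and your lower bound is vacuous --- no sharpening of the ball-mass estimate can rescue it, because the quantity being bounded is identically zero (while the proposition's conclusion is of course still true here). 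The ball-shaped event is also strictly more fragile than the paper's: a cone of half-angle close to $\pi/2$ around $w$ would still capture the axes (and nearly recovers the half-space mass, which \emph{is} controlled by tail-isotropy), whereas a ball of radius $t/L$ at radius $\asymp t$ cannot. To be fair, the paper's own proof confronts the same crux --- its lower bound is proportional to $P(F_v)$ with $F_v = \{y/\|y\| : f(y) \in S_v\}$, and the assertion $P(F_v) > 0$ is justified only by appeal to the bilipschitz hypothesis, which the same example shows requires more care. So you have correctly isolated where the real difficulty lies, but the resolution you propose (full support of the limit measure from non-integrality) is not a true statement, and your version of the event makes the failure unavoidable rather than merely unproven.
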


To work around this limitation without relaxing \Cref{assump:lipschitz}, it is evident
that tail-anisotropic base distributions $\mu$ must be considered. Perhaps the most straightforward modification to incorporate a tail-anisotropic base distribution replaces TAF's isotropic base distribution $\prod_{i=1}^d \text{StudentT}(\nu)$
with $\prod_{i=1}^d \text{StudentT}(\nu_i)$. Note that $\nu$ is no longer shared across dimensions,
enabling $d$ different tail parameters to be represented:

\begin{definition}\label[definition]{def:ataf}
  Anisotropic Tail-Adaptive Flows (ATAF) comprise the variational family
  $\cQ_\text{ATAF}~\coloneqq~\{
    (f \circ \Phi_\text{Flow})_\ast \mu_\nu
    \},$
  where $\mu_\nu = \prod_{i=1}^d \text{StudentT}(\nu_i)$, each $\nu_i$ is \emph{distinct}, and $f$ is a bijection between constrained supports \citep{kucukelbir2017automatic}.
  Analogous to \cite{jaini2020tails}, ATAF's implementation
  treats $\nu_i$ identically to the other parameters in the flow and jointly optimizes over them.
\end{definition}

\begin{remark}\label[remark]{remark:anisotropic}
  Anisotropic tail-adaptive flows can represent tail-anisotropic distributions with up to $d$ different
  tail parameters while simultaneously satisfying \Cref{assump:lipschitz}.
  For example, if $\Phi_\text{Flow} = \text{Identity}$ and $\mu_\nu = \prod_{i=1}^d \text{StudentT}(i)$
  then the pushforward $(\Phi_\text{Flow})_\ast \mu_\nu = \mu_\nu$ is tail-anisotropic.
  % Moreover, its standard basis tail parameters are equal
  % (up to permutation) to those for $\mu$.
\end{remark}

Naturally, there are other parameterizations of the tail parameters $\nu_i$ that may be more effective depending on the application. For example, in high dimensions, one might prefer not to allow for $d$ unique indices, but perhaps only fewer. On the other hand, by using only $d$ tail parameters, an approximation error will necessarily be incurred when more than $d$ different tail parameters are present. \Cref{fig:radial-fat-tail} presents a worst-case scenario where the target distribution has a continuum of tail parameters. In theory, this density could itself be used as an underlying base distribution, although we have not found this to be a good option in practice. The key takeaway is that to capture several different tails in the target density, one must consider a base distribution that incorporates sufficiently many \emph{distinct} tail parameters. 

Concerning the choice of StudentT families, we remark that since $\text{StudentT}(\nu) \Rightarrow \cN(0,1)$ 
as $\nu \to \infty$, ATAF should still provide reasonably good approximations to target 
distributions in $\overline{\cE^2}$ by taking $\nu$ sufficiently large. This can be seen in practice in \Cref{sec:normal-normal-location-mixture}.

\vspace{-2mm}
\section{Experiments}
\label{sec:experiments}

\vspace{-1mm}
Here we validate ATAF's ability to improve
a range of probabilistic modeling tasks.
Prior work \citep{jaini2020tails} demonstrated improved
density modelling when fat tails are considered, and
our experiments are complementary by evaluating TAFs and ATAFs for variational inference tasks as well as demonstrating the effect of tail-anisotropy for modelling real-world financial returns and insurance claims datasets.
We implement using the 
%\texttt{beanmachine} probabilistic programming language\citep{tehrani2020bean}
\texttt{REDACTED FOR REVIEW} probabilistic programming language[REDACTED]
and the
%\texttt{flowtorch} library for normalizing flows\citep{flowtorchai},
\texttt{REDACTED FOR REVIEW} library for normalizing flows[REDACTED],
%and we have open-sourced the code\citep{ghlic}
and we have open-sourced code for reproducing experiments[REDACTED, reviewers please see supplementary materials for code]. 
Additional details for the experiments are detailed in \Cref{sec:additional-exp-details}.

% These experiments investigate the behavior of neural density estimators with
% \emph{heavy-tailed base distribution}.
% Specifically, we consider a masked autoregressive flow \cite{papamakarios2017masked}
% transform of a generalized Student's t distribution as a density estimator $q_\theta(X)$
% in a variational inference framework. To fit $q_\theta$ to a target distribution
% $\pi$, the ELBO gradient is reparameterized and Monte-Carlo approximated
% \begin{align*}
%   \nabla_\theta \EE_{q_\theta} \log \frac{\pi(X)}{q_\theta(X)}
%   & = \nabla_\theta \EE_p \log \frac{
%     \pi(X)
%   }{p_\theta(f_\theta^{-1}(X))
%   \left\lvert \det \nabla f_\theta^{-1}(X) \right\rvert}    \\
%   & = \EE_p \nabla_\theta \log \frac{
%     \pi(X)
%   }{p_\theta(f_\theta^{-1}(X))
%   \left\lvert \det \nabla f_\theta^{-1}(X) \right\rvert}    \\
%   & \approx \frac{1}{n} \sum_i^n \nabla_\theta \log \frac{
%     \pi(x_i)
%   }{p_\theta(f_\theta^{-1}(x_i))
%     \left\lvert \det \nabla f_\theta^{-1}(x_i) \right\rvert}
% \end{align*}

\begin{figure*}[htbp]
  \centering
  \vspace{-0.2cm}
  \includegraphics{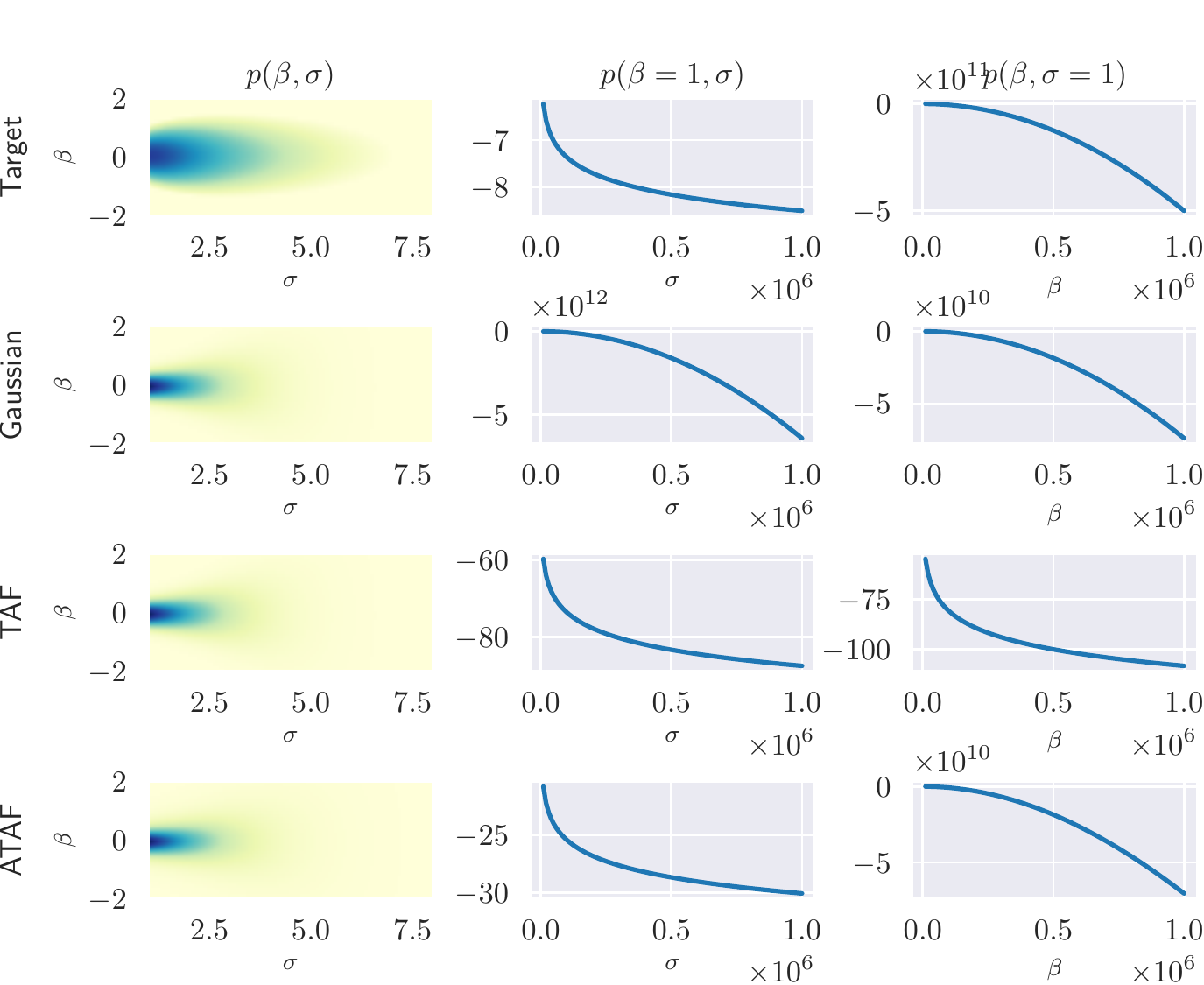}
  \vspace{-0.3cm}
  \caption{
    Bayesian linear regression's tail-anisotropic posterior
    (top left) exhibits a fat-tailed conditional in $\sigma$ (as evidenced by
    the convex power-law decay in the top middle panel) and a Gaussian conditional in $\beta$ (concave graph in top right panel).
    While all methods appear to provide a good approximation of the bulk (left column),
    \Cref{prop:isotropic-pushforward} implies
    Gaussian (Gaussian, second row) or isotropic StudentT product (TAF, third row) base distribution
    yields Gaussian or power-law tails respectively for \emph{both} $\sigma$ and $\beta$.
    In contrast, ATAF (bottom row) illustrates \Cref{remark:anisotropic} by simultaneously
    modeling a power-law tail on $\sigma$ and Gaussian tail on $\beta$.
  }
  \label{fig:blr-anisotropic}
\end{figure*}

\begin{table*}[htbp]
  \centering
    \begin{subfigure}[t]{0.49\textwidth}
        \centering
        \begin{tabular}{rcc}
            \toprule
                      & ELBO                & $\log p(y)$       \\
            \midrule
            ADVI      & $\mathbf{2873.90} \pm 6.95$    & $2969.73 \pm 1.73$ \\
            TAF       & $2839.64 \pm 9.10$    & $2973.85 \pm 0.87$ \\
            ATAF      & $2842.75 \pm 8.83$    & $\mathbf{2976.75} \pm 0.66$ \\
            \hline
            NUTS      & n/a                  & $3724.59 \pm 0.036$ \\
            \bottomrule
        \end{tabular}
        \caption{diamonds}
        \label{tab:diamonds}
    \end{subfigure}
    \begin{subfigure}[t]{0.49\textwidth}
        \centering
        \begin{tabular}{rcc}
            \toprule
                      & ELBO                & $\log p(y)$       \\
            \midrule
            ADVI      & $-72.13 \pm 6.89$    & $-53.25 \pm 3.44$ \\
            TAF       & $-64.64 \pm 4.88$    & $-52.51 \pm 4.41$ \\
            ATAF      & $\mathbf{-58.63} \pm 4.75$    & $\mathbf{-51.01} \pm 3.71$ \\
            \hline
            NUTS      & n/a                  & $-47.78 \pm 0.093$ \\
            \bottomrule
        \end{tabular}
        \caption{Eight schools}
        \label{tab:eight_schools}
    \end{subfigure}
    \vspace{-2mm}
        \caption{Monte-Carlo ELBO and importance weighted Monte-Carlo marginal likelihood
        $p(y) = \EE_{x \sim q_\theta} \frac{p(x,y)}{q_\theta(x)}$ (higher is better, $\pm$ standard errors)
        estimates from VI on real-world datasets.
        To understand the variational approximation gap, we include marginal likelihoods based on ``golden samples'' from \texttt{posteriordb} \citep{ghposteriordb} computed using No-U-Turn-Sampling (NUTS) \citep{hoffman2014no,carpenter2017stan}.
    }
  \label{fig:eight_schools}
  \vspace{-4mm}
\end{table*}

\begin{table*}[htbp]
    \centering
    \begin{tabular}{rcc}
        \toprule
                  & Fama-French 5 Industry Daily & CMS 2008-2010 DE-SynPUF       \\
        \midrule
        ADVI      & $-5.018 \pm 0.056$    & $-1.883 \pm 0.012$ \\
        TAF       & $-4.703 \pm 0.023$    & $-1.659 \pm 0.004$ \\
        ATAF      & $\mathbf{-4.699} \pm 0.024$    & $\mathbf{-1.603} \pm 0.034$ \\
        \bottomrule
    \end{tabular}
    \vspace{-2mm}
    \caption{
        Log-likelihoods (higher is better, $\pm$ standard errors) achieved on
        density modeling tasks involving financial returns \citep{fama2015five} and insurance claims \citep{cms} data. 
        \vspace{-5mm}
    }
    \label{tab:density-estimation}
\end{table*}

\subsection{Bayesian linear regression}

Consider one-dimensional Bayesian linear regression (BLR)
with conjugate priors, defined by priors and likelihood
\begin{gather*}
    \sigma^2 \sim \text{Inv-Gamma}(a_0, b_0)\\
    \beta \mid \sigma^2 \sim \cN(0, \sigma^2),\qquad
    y \mid X, \beta, \sigma \sim \cN(X \beta, \sigma^2) 
\end{gather*}
where $a_0$, $b_0$ are hyperparameters and the task is to approximate the posterior
distribution $p(\beta,\sigma^2 \mid X, y)$. Owing to conjugacy,
the posterior distribution can be explicitly computed. Indeed, $p(\beta,\sigma^2 \mid X, y) = \rho(\sigma^2)\rho(\beta \mid \sigma)$ where $\rho(\beta \mid \sigma) = \cN(\Sigma_n(X^\top X \hat\beta), \sigma^2 \Sigma_n)$, $\Sigma_n = (X^\top X + \sigma^{-2})^{-1}$, $\hat\beta = (X^\top X)^{-1} X^\top y$, and
%\begin{align*}
    %p(\beta,\sigma^2 \mid X, y) &= \rho(\sigma^2) \rho(\beta \mid \sigma) \\
    \[
    \rho(\sigma^2) = \text{Inv-Gamma}\bigg(
    %\underbrace{a_0 + \frac{n}{2}}_{\eqqcolon a_n}, 
    a_0 + \frac{n}{2}, 
    b_0 + \frac{1}{2}(y^\top y - \mu_n^\top \Sigma_n \mu_n)\bigg)% \\
    \]
    %\rho(\beta \mid \sigma) &= \cN(\Sigma_n(X^\top X \hat\beta), \sigma^2 (X^\top X + \sigma^{-2} I)^{-1})
%\end{align*}
This calculation reveals that the posterior distribution is tail-anisotropic:
for fixed $c$ we have that $p(\sigma^2, \beta=c \mid X, y) \propto \rho(\sigma^2) \in \cL^1_{\alpha_n}$
as a function of $\sigma$ (with $\alpha_n$ a function of $n$)
and $p(\sigma^2=c, \beta \mid X, y) \propto \rho(\beta \mid c) \in \overline{\cE^2}$
as a function of $\beta$.
As a result of \Cref{prop:isotropic-pushforward}, we expect ADVI and TAF to erroneously impose
Gaussian and power-law tails respectively for both $\beta$ and $\sigma^2$ as neither method
can produce a tail-anisotropic pushforward. This intuition is confirmed in \Cref{fig:blr-anisotropic},
where we see that only ATAF is the only method capable of modeling the tail-anisotropy present.

Conducting Bayesian linear regression is among the standard tasks requested of a probabilistic programming language, yet still displays tail-anisotropy. To accurately capture large quantiles, this tail-anisotropy  should not be ignored, necessitating a method such as ATAF.

\subsection{Diamond price prediction using non-conjugate Bayesian regression}
\label{ssec:diamonds}

Without conjugacy, the BLR posterior is intractable and there is no reason \emph{a priori} to expect tail-anisotropy.
Regardless, this presents a realistic and practical scenario for evaluating ATAF's ability to improve VI.
For this experiment, we consider BLR on the \texttt{diamonds} dataset \citep{wickham2011ggplot2} included in
\texttt{posteriordb} \citep{ghposteriordb}.
This dataset contains a covariate matrix $X \in \RR^{5000 \times 24}$ consisting of $5000$
diamonds each with $24$ features as well as an outcome variable $y \in \RR^{5000}$ representing each diamond's price.
The probabilistic model for this inference task is specified in Stan code provided by \citep{ghposteriordb} and is reproduced
for convenience
\begin{gather*}
    \alpha \sim \text{StudentT}(\nu=3, \text{loc}=8, \text{scale}=10)\\
    \sigma \sim \text{HalfStudentT}(\nu=3, \text{loc}=0, \text{scale}=10)\\
    \beta \sim \cN(0, \mI_{24}),\qquad
    y \sim \cN(\alpha + X \beta, \sigma)
\end{gather*}

For each VI method, we performed 100 trials each consisting of 5000 descent steps
on the Monte-Carlo ELBO estimated using 1000 samples and report the results in
\Cref{tab:diamonds}. We report both the final Monte-Carlo ELBO
as well as a Monte-Carlo importance-weighted approximation to
the log marginal likelihood $\log p(y) = \log \EE_{x \sim q_\theta} \frac{p(x,y)}{q_\theta(y)}$
both estimated using 1000 samples.

% \subsection{Bayesian Robust Linear Regression}

% $n = 100$, $d = 10$.

% $X_{ij} \simiid N(0,1)$ for $i \in [n]$, $j \in [d]$.

% $y_i \simiid \text{StudentT}(\text{loc}=X \beta, df=5)$

% Improper ``flat'' prior on $\beta$ to ensure heavy-tailed posterior.

% \begin{figure}[H]
%   \centering
%   \includegraphics[width=0.9\textwidth]{}
%   \caption{VI of Bayesian robust linear regression}
%   \label{fig:blr}
% \end{figure}

\subsection{Eight schools SAT score modelling with fat-tailed scale mixtures}
\label{ssec:eight_schools}

The eight-schools model \citep{rubin1981estimation,gelman2013bayesian} is a classical
Bayesian hierarchical model used originally to consider the relationship between standardized
test scores and coaching programs in place at eight schools.
A variation utilizing half Cauchy non-informative priors \citep{gelman2006prior} provides
a real-world inference problem involving fat-tailed distributions, and is formally specified
by the probabilistic model
\begin{gather*}
    \tau \sim \text{HalfCauchy}(\text{loc}=0, \text{scale}=5)\\
    \mu \sim \cN(0, 5),\qquad
    \theta \sim \cN(\mu, \tau),\qquad
    y \sim \cN(\theta, \sigma)
\end{gather*}
Given test scores and standard errors $\{(y_i, \sigma_i)\}_{i=1}^8$, we are interested in the
posterior distribution over treatment effects $\theta_1,\ldots,\theta_d$. The experimental
parameters are identical to \Cref{ssec:diamonds} and results are reported in \Cref{tab:eight_schools}.

\subsection{Financial and actuarial applications}

To examine the advantage of tail-anisotropic modelling in practice, we considered two benchmark datasets from financial (daily log returns for five industry indices during 1926--2021, \cite{fama2015five}) and actuarial (per-patient inpatient and outpatient cumulative Medicare/Medicid (CMS) claims during 2008--2010, \cite{cms}) applications where practitioners actively seek to model fat-tails and account for black-swan events. Identical flow architectures and optimizers were used in both cases, with log-likelihoods presented in \Cref{tab:density-estimation}. Both datasets exhibited superior fits after allowing for heavier tails, with a further improved fit using ATAF for the CMS claims dataset. 
%Using identical flow architectures and optimizers, the log likelihoods in \Cref{tab:density-estimation} suggest that models permitting tail-anisotropy (ATAF) yield significantly superior fits.

% \feynman{TODO}

% \subsection{Bayesian regression analysis on diamonds}
% \feynman{Remove? These are taking too long and don't look good}

% \begin{table}[htbp]
%   \caption{Final ELBO and (MC estimate of) log marginal $\log P(y)$ after 10000 steps on \texttt{diamonds}}
%   \label{fig:blr_diamonds}
%   \centering
%   \begin{tabular}{rcc}
%     \toprule
%               & ELBO   & $\log P(y)$ \\
%     \midrule
%     ADVI      & -374.4 & -6912.0           \\
%     ADVI-HT   & -246.3 & -6876.6     \\
%     MAF       & -211.1 & -6894.8     \\
%     MAF-HT    & -208.5 & -7442.0     \\
%     MAF-2L    & -197.9 & -6532.4     \\
%     MAF-2L-HT & -191.9 & -6839.9     \\
%     MAF-3L    & -194.1 & -7027.3     \\
%     MAF-3L-HT & -209.8 & -7128.1     \\
%     \bottomrule
%   \end{tabular}
% \end{table}

% \feynman{
%   % \subsection{Importance weights}

%   When the importance sampling density is more peaked than the target density.

%   \cite{wang2018variational} example 3.1: Let $p = N(0, \sigma_p^2)$, $q = N(0, \sigma_q^2)$,
%   and for $x \sim q$ let $w(x) = \frac{p(x)}{q(x)}$. If $\sigma_p > \sigma_q$,
%   then $w$ has tail index $\frac{\sigma_p^2}{\sigma_p^2 - \sigma_q^2}$.
%   Otherwise, $w$ is not fat-tailed.
% }

%\section{Related Work}

\section{Conclusion}
\label{sec:conclusion}

In this work, we have sharpened existing theory for approximating fat-tailed distributions with normalizing flows, and formalized tail-(an)isotropy through a direction-dependent tail parameter. With this, we have shown that many prior flow-based methods are inherently limited by tail-isotropy. With this in mind, we proposed a simple flow-based method capable of modeling tail-anisotropic targets.
As we have seen, anisotropic FTVI is already applicable in fairly elementary examples such as Bayesian linear regression
and ATAFs provide one of the first methods for using the representational capacity of flow-based methods
while simultaneously producing tail-anisotropic distributions. A number of open problems still remain, including the study of other parameterizations of the tail behaviour of the base distribution. Even so, going forward, it seems prudent that density estimators, especially those used in black-box settings, consider accounting for tail-anisotropy using a method such as ATAF.% expect the scientific impact of ATAFs to be
%fairly easily realized, as it can be applied wherever a parametric density estimator is used.
%While we do not expect ATAF to directly have any negative societal impacts,
%its broad applicability as a parametric density estimator which accommodates fat-tails means that it may be
%used to improve applications which do cause negative societal impact.

% VI is important, FTVI is very important, ADVI did a good job with FTVI, we do a great job with FTVI, and you the reader should follow up on us and do FTVI.

\bibliography{refs_ftvi}

\begin{thebibliography}{45}
\providecommand{\natexlab}[1]{#1}
\providecommand{\url}[1]{\texttt{#1}}
\expandafter\ifx\csname urlstyle\endcsname\relax
  \providecommand{\doi}[1]{doi: #1}\else
  \providecommand{\doi}{doi: \begingroup \urlstyle{rm}\Url}\fi

\bibitem[Abiri \& Ohlsson(2020)Abiri and Ohlsson]{abiri2020variational}
Abiri, N. and Ohlsson, M.
\newblock Variational auto-encoders with student's t-prior.
\newblock \emph{arXiv preprint arXiv:2004.02581}, 2020.

\bibitem[Agrawal et~al.(2020)Agrawal, Sheldon, and Domke]{agrawal2020advances}
Agrawal, A., Sheldon, D., and Domke, J.
\newblock Advances in black-box vi: Normalizing flows, importance weighting,
  and optimization.
\newblock \emph{arXiv preprint arXiv:2006.10343}, 2020.

\bibitem[Bingham et~al.(2019)Bingham, Chen, Jankowiak, Obermeyer, Pradhan,
  Karaletsos, Singh, Szerlip, Horsfall, and Goodman]{bingham2019pyro}
Bingham, E., Chen, J.~P., Jankowiak, M., Obermeyer, F., Pradhan, N.,
  Karaletsos, T., Singh, R., Szerlip, P., Horsfall, P., and Goodman, N.~D.
\newblock Pyro: Deep universal probabilistic programming.
\newblock \emph{The Journal of Machine Learning Research}, 20\penalty0
  (1):\penalty0 973--978, 2019.

\bibitem[Boenninghoff et~al.(2020)Boenninghoff, Zeiler, Nickel, and
  Kolossa]{boenninghoff2020variational}
Boenninghoff, B., Zeiler, S., Nickel, R.~M., and Kolossa, D.
\newblock Variational autoencoder with embedded student-$ t $ mixture model for
  authorship attribution.
\newblock \emph{arXiv preprint arXiv:2005.13930}, 2020.

\bibitem[Buraczewski et~al.(2016)Buraczewski, Damek, Mikosch,
  et~al.]{buraczewski2016stochastic}
Buraczewski, D., Damek, E., Mikosch, T., et~al.
\newblock Stochastic models with power-law tails.
\newblock \emph{The equation X= AX+ B. Cham: Springer}, 2016.

\bibitem[Cao et~al.(2019)Cao, Li, Nelson, and Kon]{cao2019coupled}
Cao, S., Li, J., Nelson, K.~P., and Kon, M.~A.
\newblock Coupled vae: Improved accuracy and robustness of a variational
  autoencoder.
\newblock \emph{arXiv preprint arXiv:1906.00536}, 2019.

\bibitem[Carpenter et~al.(2017)Carpenter, Gelman, Hoffman, Lee, Goodrich,
  Betancourt, Brubaker, Guo, Li, and Riddell]{carpenter2017stan}
Carpenter, B., Gelman, A., Hoffman, M.~D., Lee, D., Goodrich, B., Betancourt,
  M., Brubaker, M.~A., Guo, J., Li, P., and Riddell, A.
\newblock Stan: a probabilistic programming language.
\newblock \emph{Grantee Submission}, 76\penalty0 (1):\penalty0 1--32, 2017.

\bibitem[Chen et~al.(2020)Chen, Svoboda, and Nelson]{chen2020use}
Chen, K.~R., Svoboda, D., and Nelson, K.~P.
\newblock Use of student's t-distribution for the latent layer in a coupled
  variational autoencoder.
\newblock \emph{arXiv preprint arXiv:2011.10879}, 2020.

\bibitem[Chen et~al.(2019)Chen, Behrmann, Duvenaud, and
  Jacobsen]{chen2019residual}
Chen, R.~T., Behrmann, J., Duvenaud, D., and Jacobsen, J.-H.
\newblock Residual flows for invertible generative modeling.
\newblock \emph{arXiv preprint arXiv:1906.02735}, 2019.

\bibitem[CMS(2010)]{cms}
CMS.
\newblock Cms 2008-2010 data entrepreneurs' synthetic public use file
  (de-synpuf), 2010.
\newblock URL
  \url{https://www.cms.gov/Research-Statistics-Data-and-Systems/Downloadable-Public-Use-Files/SynPUFs/DE_Syn_PUF}.

\bibitem[Developers(2021)]{ghposteriordb}
Developers, T.~S.
\newblock {posteriordb: a database of Bayesian posterior inference}.
\newblock \url{https://github.com/stan-dev/posteriordb}, 2021.

\bibitem[Ding et~al.(2011)Ding, Qi, and Vishwanathan]{ding2011t}
Ding, N., Qi, Y., and Vishwanathan, S.
\newblock t-divergence based approximate inference.
\newblock \emph{Advances in Neural Information Processing Systems},
  24:\penalty0 1494--1502, 2011.

\bibitem[Dinh et~al.(2014)Dinh, Krueger, and Bengio]{dinh2014nice}
Dinh, L., Krueger, D., and Bengio, Y.
\newblock Nice: Non-linear independent components estimation.
\newblock \emph{arXiv preprint arXiv:1410.8516}, 2014.

\bibitem[Dinh et~al.(2016)Dinh, Sohl-Dickstein, and Bengio]{dinh2016density}
Dinh, L., Sohl-Dickstein, J., and Bengio, S.
\newblock Density estimation using real nvp.
\newblock \emph{arXiv preprint arXiv:1605.08803}, 2016.

\bibitem[Durkan et~al.(2019)Durkan, Bekasov, Murray, and
  Papamakarios]{durkan2019neural}
Durkan, C., Bekasov, A., Murray, I., and Papamakarios, G.
\newblock Neural spline flows.
\newblock \emph{arXiv preprint arXiv:1906.04032}, 2019.

\bibitem[Fama \& French(2015)Fama and French]{fama2015five}
Fama, E.~F. and French, K.~R.
\newblock A five-factor asset pricing model.
\newblock \emph{Journal of financial economics}, 116\penalty0 (1):\penalty0
  1--22, 2015.

\bibitem[Futami et~al.(2017)Futami, Sato, and Sugiyama]{futami2017expectation}
Futami, F., Sato, I., and Sugiyama, M.
\newblock Expectation propagation for t-exponential family using q-algebra.
\newblock In \emph{Advances in Neural Information Processing Systems}, pp.\
  2245--2254, 2017.

\bibitem[Gelman \& Hill(2006)Gelman and Hill]{gelman2006data}
Gelman, A. and Hill, J.
\newblock \emph{Data analysis using regression and multilevel/hierarchical
  models}.
\newblock Cambridge university press, 2006.

\bibitem[Gelman et~al.(2013)Gelman, Carlin, Stern, Dunson, Vehtari, and
  Rubin]{gelman2013bayesian}
Gelman, A., Carlin, J.~B., Stern, H.~S., Dunson, D.~B., Vehtari, A., and Rubin,
  D.~B.
\newblock \emph{Bayesian data analysis}.
\newblock CRC press, 2013.

\bibitem[Gelman et~al.(2006)]{gelman2006prior}
Gelman, A. et~al.
\newblock Prior distributions for variance parameters in hierarchical models
  (comment on article by browne and draper).
\newblock \emph{Bayesian analysis}, 1\penalty0 (3):\penalty0 515--534, 2006.

\bibitem[Grathwohl et~al.(2018)Grathwohl, Chen, Bettencourt, Sutskever, and
  Duvenaud]{grathwohl2018ffjord}
Grathwohl, W., Chen, R.~T., Bettencourt, J., Sutskever, I., and Duvenaud, D.
\newblock Ffjord: Free-form continuous dynamics for scalable reversible
  generative models.
\newblock \emph{arXiv preprint arXiv:1810.01367}, 2018.

\bibitem[He et~al.(2015)He, Zhang, Ren, and Sun]{he2015delving}
He, K., Zhang, X., Ren, S., and Sun, J.
\newblock Delving deep into rectifiers: Surpassing human-level performance on
  imagenet classification.
\newblock In \emph{Proceedings of the IEEE international conference on computer
  vision}, pp.\  1026--1034, 2015.

\bibitem[Hoffman \& Gelman(2014)Hoffman and Gelman]{hoffman2014no}
Hoffman, M.~D. and Gelman, A.
\newblock The no-u-turn sampler: adaptively setting path lengths in hamiltonian
  monte carlo.
\newblock \emph{J. Mach. Learn. Res.}, 15\penalty0 (1):\penalty0 1593--1623,
  2014.

\bibitem[Huang et~al.(2018)Huang, Krueger, Lacoste, and
  Courville]{huang2018neural}
Huang, C.-W., Krueger, D., Lacoste, A., and Courville, A.
\newblock Neural autoregressive flows.
\newblock In \emph{International Conference on Machine Learning}, pp.\
  2078--2087. PMLR, 2018.

\bibitem[Husz{\'a}r(2017)]{huszar2017variational}
Husz{\'a}r, F.
\newblock Variational inference using implicit distributions.
\newblock \emph{arXiv preprint arXiv:1702.08235}, 2017.

\bibitem[Jaini et~al.(2019)Jaini, Selby, and Yu]{jaini2019sum}
Jaini, P., Selby, K.~A., and Yu, Y.
\newblock Sum-of-squares polynomial flow.
\newblock In \emph{International Conference on Machine Learning}, pp.\
  3009--3018. PMLR, 2019.

\bibitem[Jaini et~al.(2020)Jaini, Kobyzev, Yu, and Brubaker]{jaini2020tails}
Jaini, P., Kobyzev, I., Yu, Y., and Brubaker, M.
\newblock Tails of lipschitz triangular flows.
\newblock In \emph{International Conference on Machine Learning}, pp.\
  4673--4681. PMLR, 2020.

\bibitem[Kingma \& Dhariwal(2018)Kingma and Dhariwal]{kingma2018glow}
Kingma, D.~P. and Dhariwal, P.
\newblock Glow: Generative flow with invertible 1x1 convolutions.
\newblock \emph{arXiv preprint arXiv:1807.03039}, 2018.

\bibitem[Kingma et~al.(2016)Kingma, Salimans, Jozefowicz, Chen, Sutskever, and
  Welling]{kingma2016improved}
Kingma, D.~P., Salimans, T., Jozefowicz, R., Chen, X., Sutskever, I., and
  Welling, M.
\newblock Improved variational inference with inverse autoregressive flow.
\newblock In \emph{Advances in neural information processing systems}, pp.\
  4743--4751, 2016.

\bibitem[Kucukelbir et~al.(2017)Kucukelbir, Tran, Ranganath, Gelman, and
  Blei]{kucukelbir2017automatic}
Kucukelbir, A., Tran, D., Ranganath, R., Gelman, A., and Blei, D.~M.
\newblock Automatic differentiation variational inference.
\newblock \emph{The Journal of Machine Learning Research}, 18\penalty0
  (1):\penalty0 430--474, 2017.

\bibitem[Ledoux(2001)]{ledoux2001concentration}
Ledoux, M.
\newblock \emph{The concentration of measure phenomenon}.
\newblock American Mathematical Soc., 2001.

\bibitem[Li \& Turner(2016)Li and Turner]{li2016variational}
Li, Y. and Turner, R.~E.
\newblock Variational inference with r{\'e}nyi divergence.
\newblock \emph{stat}, 1050:\penalty0 6, 2016.

\bibitem[Mathieu et~al.(2019)Mathieu, Rainforth, Siddharth, and
  Teh]{mathieu2019disentangling}
Mathieu, E., Rainforth, T., Siddharth, N., and Teh, Y.~W.
\newblock Disentangling disentanglement in variational autoencoders.
\newblock In \emph{International Conference on Machine Learning}, pp.\
  4402--4412. PMLR, 2019.

\bibitem[Papamakarios et~al.(2017)Papamakarios, Pavlakou, and
  Murray]{papamakarios2017masked}
Papamakarios, G., Pavlakou, T., and Murray, I.
\newblock Masked autoregressive flow for density estimation.
\newblock In \emph{Advances in Neural Information Processing Systems}, pp.\
  2338--2347, 2017.

\bibitem[Papamakarios et~al.(2021)Papamakarios, Nalisnick, Rezende, Mohamed,
  and Lakshminarayanan]{papamakarios2021normalizing}
Papamakarios, G., Nalisnick, E., Rezende, D.~J., Mohamed, S., and
  Lakshminarayanan, B.
\newblock Normalizing flows for probabilistic modeling and inference.
\newblock \emph{Journal of Machine Learning Research}, 22\penalty0
  (57):\penalty0 1--64, 2021.

\bibitem[Patil et~al.(2010)Patil, Huard, and Fonnesbeck]{patil2010pymc}
Patil, A., Huard, D., and Fonnesbeck, C.~J.
\newblock Pymc: Bayesian stochastic modelling in python.
\newblock \emph{Journal of statistical software}, 35\penalty0 (4):\penalty0 1,
  2010.

\bibitem[Ranganath et~al.(2014)Ranganath, Gerrish, and
  Blei]{ranganath2014black}
Ranganath, R., Gerrish, S., and Blei, D.
\newblock Black box variational inference.
\newblock In \emph{Artificial intelligence and statistics}, pp.\  814--822.
  PMLR, 2014.

\bibitem[Rezende \& Mohamed(2015)Rezende and Mohamed]{rezende2015variational}
Rezende, D. and Mohamed, S.
\newblock Variational inference with normalizing flows.
\newblock In \emph{International Conference on Machine Learning}, pp.\
  1530--1538. PMLR, 2015.

\bibitem[Rubin(1981)]{rubin1981estimation}
Rubin, D.~B.
\newblock Estimation in parallel randomized experiments.
\newblock \emph{Journal of Educational Statistics}, 6\penalty0 (4):\penalty0
  377--401, 1981.

\bibitem[Stefan~Webb \& Goodman(2019)Stefan~Webb and
  Goodman]{webb2019improving}
Stefan~Webb, J. P.~Chen, M.~J. and Goodman, N.
\newblock Improving automated variational inference with normalizing flows.
\newblock \emph{6th ICML Workshop on Automated Machine Learning (AutoML)},
  2019.

\bibitem[Tipping \& Lawrence(2005)Tipping and Lawrence]{tipping2005variational}
Tipping, M.~E. and Lawrence, N.~D.
\newblock Variational inference for student-t models: Robust bayesian
  interpolation and generalised component analysis.
\newblock \emph{Neurocomputing}, 69\penalty0 (1-3):\penalty0 123--141, 2005.

\bibitem[Wang et~al.(2018)Wang, Liu, and Liu]{wang2018variational}
Wang, D., Liu, H., and Liu, Q.
\newblock Variational inference with tail-adaptive f-divergence.
\newblock \emph{Advances in Neural Information Processing Systems},
  31:\penalty0 5737--5747, 2018.

\bibitem[Wehenkel \& Louppe(2019)Wehenkel and
  Louppe]{wehenkel2019unconstrained}
Wehenkel, A. and Louppe, G.
\newblock Unconstrained monotonic neural networks.
\newblock \emph{arXiv preprint arXiv:1908.05164}, 2019.

\bibitem[Wickham(2011)]{wickham2011ggplot2}
Wickham, H.
\newblock ggplot2.
\newblock \emph{Wiley Interdisciplinary Reviews: Computational Statistics},
  3\penalty0 (2):\penalty0 180--185, 2011.

\bibitem[Wingate \& Weber(2013)Wingate and Weber]{wingate2013automated}
Wingate, D. and Weber, T.
\newblock Automated variational inference in probabilistic programming.
\newblock \emph{arXiv preprint arXiv:1301.1299}, 2013.

\end{thebibliography}
\bibliographystyle{icml2022}

\newpage
\appendix
\onecolumn

\section{Proofs}
\label{sec:proofs}

\begin{proof}[Proof of \Cref{thm:distn_class_closed}]
  \label{proof:distn_class_closed}
  Let $X$ be a random variable from either $\cE_\alpha^p$
  or $\cL_\alpha^p$.
  Its concentration function
  (Equation 1.6 \cite{ledoux2001concentration}
  is given by
  \[
    \alpha_X(r)
    \coloneqq \sup \{ \mu\{x : d(x,A) \geq r\}; A \subset \text{supp}~X, \mu(A) \geq 1/2\}
    = \PP(\lvert X - m_X \rvert \geq r)
  \]
  Under Assumption 1, $f_\theta$ is Lipschitz (say with Lipschitz
  constant $L$) so by Proposition 1.3 of \cite{ledoux2001concentration},
  \[
    \PP(\lvert f_\theta(X) - m_{f_\theta(X)}\rvert \geq r)
    \leq 2 \alpha_X(r/L)
    = \cO(\alpha_X(r/L)),
  \]
  where $m_{f_\theta(X)}$ is a median of $f_\theta(X)$.
  Furthermore, by the triangle inequality
  \begin{align}
    \PP(\lvert f_\theta(X) \rvert \geq r)
    &= \PP(\lvert f_\theta(X) - m_{f_\theta(X)} + m_{f_\theta(X)} \rvert \geq r) \nonumber\\
    &\leq \PP(\lvert f_\theta(X) - m_{f_\theta(X)}\rvert \geq r - \lvert m_{f_\theta(X)}\rvert ) \nonumber\\
    &= \cO(\PP(\lvert f_\theta(X) - m_{f_\theta(X)}\rvert \geq r)) \nonumber\\
    &= \cO(\alpha_X(r/L)) \label{eq:pushforward-conc-fn}
  \end{align}
  where the asymptotic equivalence holds because $\lvert m_{f_\theta(X)} \rvert$ is independent of $r$.
  When $X \in \cE_\alpha^p$, \Cref{eq:pushforward-conc-fn} implies
  \[
    \PP(\lvert f_\theta(X) \rvert \geq r)
    = \cO(e^{-\frac{\alpha}{L} r^p}) \implies f_\theta(X) \in \overline{\cE}_{\alpha/L}^p,
  \]
  from whence we find that the Lipschitz transform of exponential-type
  tails continues to possess exponential-type tails with the same
  class index $p$, although the tail parameter may have changed. Hence,
  $\overline{\cE^p}$ is closed under Lipschitz maps for each $p \in \RR_{>0}$.
  On the other hand, when $X \in \cL_\alpha^p$, \Cref{eq:pushforward-conc-fn} also implies that
  \begin{align*}
    \PP(\lvert f_\theta(X) \rvert \geq r)
    &= \cO(e^{-\alpha (\log (r/L))^p})
    % &= \cO(e^{-\alpha (\log r)^p} e^{-\alpha (-\log L)^p}) \\
    = \cO(e^{-\alpha (\log r)^p}),
  \end{align*}
  and therefore, $f_\theta(X) \in \overline{\cL_\alpha^p}$.
  Unlike exponential-type tails, Lipschitz transforms of
  logarithmic-type tails not only remain logarithmic, but
  their tails decay no slower than a logarithmic-type tail
  of the same class index with the \emph{same} tail parameter $\alpha$.
  This upper bound suffices to show closure under Lipschitz maps for the
  ascending family $\overline{\cL_\alpha^p}$.
\end{proof}

\begin{proof}[Proof of \Cref{corr:heavy_to_light}]
    Let $f_\theta$ be as before with the additional assumptions.
    Since $f_\theta$ is a smooth continuous bijection, it is a diffeomorphism.
    Furthermore, by assumption $f_\theta$ has invertible Jacobian on the closure of its
    domain hence $\sup_{x \in \text{dom}~f_\theta} \lvert (f_\theta)'(x) \rvert \geq M > 0$.
    By the inverse function theorem, $(f_\theta)^{-1}$ exists and is
    a diffeomorphism with
    \[
    \frac{d}{dx}(f_\theta)^{-1}(x) = \frac{1}{(f_\theta)'((f_\theta)^{-1}(x))} \leq \frac{1}{M}
    \]
    Therefore, $(f_\theta)^{-1}$ is $M^{-1}$-Lipschitz and we may apply
    \Cref{thm:distn_class_closed} to conclude the desired result.
    %\footnote{\url{https://math.stackexchange.com/questions/394908/diffeomorphism-from-inverse-function-theorem}}
\end{proof}

\begin{proof}[Proof of \Cref{corr:closure_polynomials}]
  Let $X \in \cE^p_\alpha$.
  By considering sufficiently large $X$ such that leading powers dominate, it suffices to consider monomials $Y = X^k$.
  Notice $\PP(Y \geq x) = \PP(X \geq x^{1/k}) = \Theta(e^{-\alpha x^{p/k}})$, and so
  $Y \in \cE^{p/k}_\alpha$. The result follows by disjointness of $\mathcal{E}$ and $\mathcal{L}$. 
\end{proof}

\begin{lemma}
    \label[lemma]{lem:sum-rule}
    Suppose $X \in \cL^1_\alpha$ and $Y \in \cL^1_\beta$.
    Then $X + Y \in \cL^1_{\min\{\alpha,\beta\}}$.
\end{lemma}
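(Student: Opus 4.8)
The plan is to first unwind the definition. Since the class index here is $p=1$, membership $X \in \cL^1_\alpha$ means $\PP(|X| \geq x) = \Theta(e^{-\alpha \log x}) = \Theta(x^{-\alpha})$; that is, $\cL^1_\alpha$ is precisely the power-law class with tail exponent $\alpha$. Without loss of generality I would assume $\alpha = \min\{\alpha,\beta\} \leq \beta$, so that the goal reduces to establishing $\PP(|X+Y| \geq x) = \Theta(x^{-\alpha})$, which I would split into a matching upper ($\cO$) and lower ($\Omega$) bound.

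For the upper bound I would use the triangle inequality together with a union bound. Since $|X+Y| \geq x$ forces $|X| \geq x/2$ or $|Y| \geq x/2$, we get $\PP(|X+Y| \geq x) \leq \PP(|X| \geq x/2) + \PP(|Y| \geq x/2) = \Theta(x^{-\alpha}) + \Theta(x^{-\beta})$. Because $\beta \geq \alpha$, the second term is dominated by the first and $\PP(|X+Y| \geq x) = \cO(x^{-\alpha})$, with no independence needed here.

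For the lower bound I would use the reverse triangle inequality to carve out an event on which $X$ is large while $Y$ is merely moderate: if $|X| \geq 2x$ and $|Y| \leq x$ then $|X+Y| \geq |X| - |Y| \geq x$, so $\{|X| \geq 2x\} \cap \{|Y| \leq x\} \subseteq \{|X+Y| \geq x\}$. Assuming $X$ and $Y$ are independent (as holds in our application to product base distributions), the probability factorizes; since $\PP(|Y| \leq x) \to 1$ we have $\PP(|Y| \leq x) \geq \frac{1}{2}$ for all sufficiently large $x$, whence $\PP(|X+Y| \geq x) \geq \frac{1}{2}\PP(|X| \geq 2x) = \Omega(x^{-\alpha})$. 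Combining the two bounds gives $\PP(|X+Y| \geq x) = \Theta(x^{-\alpha}) = \Theta(x^{-\min\{\alpha,\beta\}})$, i.e.\ $X+Y \in \cL^1_{\min\{\alpha,\beta\}}$.

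The main obstacle is the lower bound rather than the routine union-bound upper bound: one must rule out the possibility that $Y$ cancels the large values of $X$. This is exactly where independence is essential — the statement is in fact false for dependent variables, as taking $Y = -X \in \cL^1_\alpha$ gives $X + Y \equiv 0$ — and the reverse-triangle-inequality event above is precisely the device that isolates the heavier of the two tails while keeping the lighter variable controlled.
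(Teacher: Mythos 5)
Your proof is correct (under independence, which you rightly flag), and it is worth comparing carefully with the paper's, since the two differ on both halves. For the upper bound, the paper does not use your union bound: it observes that $(X,Y)\mapsto|X+Y|$ is $1$-Lipschitz on $\RR^2$ and invokes the concentration-function machinery of \citet{ledoux2001concentration} (Proposition 1.11), consistent with the toolkit used elsewhere in the paper; your union bound $\{|X+Y|\geq x\}\subseteq\{|X|\geq x/2\}\cup\{|Y|\geq x/2\}$ is more elementary and, as you note, visibly independence-free, whereas the concentration argument implicitly treats $(X,Y)$ as a product measure. For the lower bound, both proofs isolate the event where the lighter-tailed variable stays bounded while the heavier one is large, but your execution is actually the sounder one: the paper's chain begins with $\PP(|X+Y|\geq r)\geq\PP(|X|+|Y|\geq r)$, which is backwards (pointwise $|X+Y|\leq|X|+|Y|$), and its conditioning step bounds $\PP(|X|+|Y|\geq r \mid |Y|\leq M)$ rather than the quantity involving $|X+Y|$ that is actually needed. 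Your event $\{|X|\geq 2x\}\cap\{|Y|\leq x\}\subseteq\{|X+Y|\geq x\}$, obtained from the reverse triangle inequality, is precisely the repair: it costs only a constant factor $2^{-\alpha}$ in the asymptotics and yields $\Omega(x^{-\alpha})$ cleanly. You are also correct that independence (or some control on the joint law) is genuinely needed for the lower bound and is used silently by the paper; your $Y=-X$ counterexample shows the lemma is false as stated for arbitrary dependence, and the hypothesis does hold in the paper's application, where the lemma is applied to product base distributions such as $\prod_{i=1}^d \text{StudentT}(\nu_i)$.
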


\begin{proof}
First, let $\gamma=\min\{\alpha,\beta\}$. It will suffice to show that (I) $\mathbb{P}(|X+Y|\geq r)=\mathcal{O}(r^{-\gamma})$, and (II) $\mathbb{P}(|X+Y|\geq r)\geq\Theta(r^{-\gamma})$. Since $(X,Y)\mapsto|X+Y|$ is a 1-Lipschitz function on $\mathbb{R}^{2}$ and $\mathbb{P}(|X|\geq r)+\mathbb{P}(|Y|\geq r)=\mathcal{O}(r^{-\gamma})$, (I) follows directly from the hypotheses and Proposition 1.11 of \cite{ledoux2001concentration}. To show (II), note that for any $M>0$, conditioning on the event $|Y|\leq M$,\[
\mathbb{P}\left(\left|X\right|+|Y|\geq r\,\vert\,|Y|\leq M\right)\geq\mathbb{P}\left(\left|X\right|\geq r-M\right).
\]
Therefore, by taking $M$ to be sufficiently large so that $\mathbb{P}(|Y|\leq M)\geq\frac{1}{2}$,
\begin{align*}
\mathbb{P}\left(|X+Y|\geq r\right)&\geq\mathbb{P}\left(|X|+|Y|\geq r\right)\\
&\geq\mathbb{P}\left(\left|X\right|+|Y|\geq r\,\vert\,|Y|\leq M\right)\mathbb{P}\left(\left|Y\right|\leq M\right)\\
&\geq\frac{1}{2}\mathbb{P}\left(\left|X\right|\geq r-M\right)=\Theta(r^{-\alpha}).
\end{align*}
The same process with $X$ and $Y$ reversed implies $\mathbb{P}(|X+Y|\geq r)\geq\Theta(r^{-\beta})$ as well. Both (II) and the claim follow.
\end{proof}

To show Proposition \ref{prop:isotropic-pushforward}, we will require a few extra assumptions to rule out pathological cases. The full content of Proposition \ref{prop:isotropic-pushforward} is contained in the following theorem.

\begin{theorem}
Suppose there exists $\nu > 0$ such that $C:\mathcal{S}^{d-1}\to(0,\infty)$ satisfies $C(v) \coloneqq \lim_{x \to \infty}x^{\nu}\mathbb{P}(|\langle v, X\rangle| > x)$ for all $v \in \mathcal{S}^{d-1}$. If $\nu$ is not an integer and $f$ is a bilipschitz function,
%such that $f(X)/\|f(X)\|$ has full support on $\mathcal{S}^{d-1}$
then $f(X)$ is tail-isotropic with tail index $\nu$. 
\end{theorem}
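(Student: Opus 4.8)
The plan is to show that $\alpha_{f(X)}(w) = \nu$ for every $w \in \cS^{d-1}$, which is exactly tail-isotropy with index $\nu$, by obtaining matching upper and lower bounds on $\PP(|\braket{w, f(X)}| > t)$. First I would record two structural consequences of the hypothesis. The assumption that $\braket{v,X}$ has a genuine power-law tail of index $\nu$ in \emph{every} direction, together with $\nu \notin \mathbb{Z}$, is precisely the input of a Cramér--Wold-type device for regular variation: it upgrades ``all one-dimensional projections are regularly varying with index $\nu$'' to ``$X$ is multivariate regularly varying with index $\nu$,'' producing a nonzero homogeneous limit measure (equivalently a spectral measure $\sigma$ on $\cS^{d-1}$). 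The non-integrality of $\nu$ cannot be dropped, since it rules out the cancellation phenomena that permit integer-index counterexamples. Moreover, $C(v) > 0$ for all $v$ forces $\operatorname{supp}\sigma$ to escape along directions spanning $\RR^d$. A covering/net argument over $\cS^{d-1}$ then gives $\PP(\|X\| > t) = \Theta(t^{-\nu})$, and bilipschitzness ($L^{-1}\|X\| - c \le \|f(X)\| \le L\|X\| + c$) transfers this to $\PP(\|f(X)\| > t) = \Theta(t^{-\nu})$.

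The upper bound is then immediate: $|\braket{w, f(X)}| \le \|f(X)\|$, so $\PP(|\braket{w,f(X)}| > t) \le \PP(\|f(X)\| > t) = \cO(t^{-\nu})$ for every $w$, giving $\alpha_{f(X)}(w) \ge \nu$. This half uses only that $f$ is Lipschitz and mirrors the concentration-function estimate behind \Cref{thm:distn_class_closed}.

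For the matching lower bound I would argue that for each direction $w$ there is an escaping direction $u \in \operatorname{supp}\sigma$ whose image ray is \emph{not} asymptotically orthogonal to $w$, i.e. $\limsup_{s\to\infty} |\braket{w, f(su)}|/s =: c_u > 0$. Granting such a $u$, I transport the mass: because $u \in \operatorname{supp}\sigma$, a thin angular cone about the ray $\RR_+ u$ carries $\Theta(s^{-\nu})$ of the mass of $X$ at radius $\gtrsim s$; since $f$ is $L$-Lipschitz, points of that cone at radius $\approx s$ map to within $L\eps s$ of $f(su)$, hence satisfy $|\braket{w, f(\cdot)}| \ge (c_u - L\eps)s$ once the cone is narrow enough, so $\PP(|\braket{w,f(X)}| > \tfrac{c_u}{2}s) \gtrsim s^{-\nu}$; monotonicity of the tail in $s$ upgrades this to all large $t$, yielding $\alpha_{f(X)}(w) \le \nu$. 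Combined with the upper bound this gives $\alpha_{f(X)}(w) = \nu$.

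The hard part --- and the step whose validity I would scrutinize most carefully --- is producing, for every $w$, such an escaping direction $u$; equivalently, ruling out that $f$ sends the whole spanning family of escaping rays asymptotically into the hyperplane $w^\perp$. In $d = 2$ this follows by a pigeonhole/injectivity argument: $w^\perp \cap \cS^1$ has only two points, and a bilipschitz map cannot send two distinct rays to a common asymptotic direction. In higher dimensions $w^\perp \cap \cS^{d-1}$ is large, and bilipschitz maps are surprisingly flexible at infinity --- e.g. the cone $f(r\omega) = r\,\Phi(\omega)$ over any bilipschitz homeomorphism $\Phi$ of $\cS^{d-1}$ is itself bilipschitz yet can reposition the escaping directions almost arbitrarily. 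Consequently a naive volume bound such as $\operatorname{vol}(f(B(0,s))) \ge L^{-d}\operatorname{vol}(B(0,s))$ only locates \emph{some} expanding direction, which need not lie in $\operatorname{supp}\sigma$ where the mass actually sits. Closing this gap --- reconciling where $f$ expands in direction $w$ with where $X$ concentrates its mass --- is the crux; I expect it to require the full positively-spanning structure of $\operatorname{supp}\sigma$ and the lower bilipschitz bound working in tandem, and it is the place where I would look hardest for either a proof or a strengthening of the hypotheses.
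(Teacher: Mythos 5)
Your proposal takes essentially the same route as the paper's own proof: invoke the Cram\'er--Wold device for multivariate regular variation (Theorem C.2.1 of \cite{buraczewski2016stochastic}; this is exactly where $\nu \notin \mathbb{Z}$ enters) to obtain a spectral measure $P$ on $\cS^{d-1}$, get the upper bound $\PP(|\braket{w, f(X)}| > t) = \cO(t^{-\nu})$ from Lipschitz continuity, and seek a matching lower bound by pushing cone mass through $f$. The step you single out as the crux is precisely where the paper's own argument is unsupported: writing $S_v$ for an angular cone about $v$, the paper sets $F_v = \{y/\|y\| : f(y) \in S_v\}$ and asserts $P(F_v) > 0$ ``by assumption'' and ``from the bilipschitz condition.'' Neither justification works. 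Bilipschitzness gives $F_v$ nonempty interior, hence positive \emph{uniform} measure on the sphere, but $P$ can be mutually singular with the uniform measure: for $X$ with iid StudentT coordinates --- the canonical TAF base distribution --- $P$ is purely atomic, supported on the $2d$ axis directions $\pm e_i$. (A secondary defect: $X/\|X\| \in F_v$ does not imply $f(X) \in S_v$, since membership in $F_v$ only says \emph{some} point of the ray maps into $S_v$.)

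Your suspicion that this step may be irreparable is correct for $d \geq 3$: the cone construction you sketch can be completed into a counterexample to the statement itself. Let $\nu \notin \mathbb{Z}$ and let $X$ have iid StudentT$(\nu)$ coordinates in $\RR^d$, $d \geq 3$, so that $C(v) = c_\nu \sum_i |v_i|^\nu \in (0,\infty)$ for every $v \in \cS^{d-1}$ and the hypotheses hold, with $P$ supported on $\{\pm e_i\}$. Fix $w$, choose a bilipschitz homeomorphism $\Phi$ of $\cS^{d-1}$ carrying the $2d$ points $\pm e_i$ to distinct points of the equator $w^\perp \cap \cS^{d-1}$ (impossible for $d=2$, easy for $d \geq 3$), and set $f(x) = \|x\|\,\Phi(x/\|x\|)$, a bilipschitz bijection. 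Then $h(u) \coloneqq \braket{w, \Phi(u)}$ is Lipschitz and vanishes at every $\pm e_i$, and a short computation gives $|\braket{w, f(X)}| = \|X\|\,|h(X/\|X\|)| \leq C_d\, |X_{(2)}|$, where $X_{(2)}$ is the second-largest coordinate of $X$ in magnitude; hence $\PP(|\braket{w, f(X)}| > t) = \cO(t^{-2\nu})$ and $\alpha_{f(X)}(w) \geq 2\nu$, whereas $\alpha_{f(X)}(v) = \nu$ for any $v$ with $\braket{v, \Phi(e_1)} \neq 0$, since the mass escaping along the $e_1$-axis maps to mass escaping along $\Phi(e_1)$. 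So $f(X)$ is not tail-isotropic with index $\nu$ (this also contradicts \Cref{prop:isotropic-pushforward}), and no argument can close your gap at this level of generality: the hypotheses must be strengthened, e.g.\ to $d = 2$ with spectral support containing the four axis directions, or to spectral measures charging every spherical cap. One last correction: your $d=2$ pigeonhole lemma (``a bilipschitz map cannot send two distinct rays to a common asymptotic direction'') is false as stated --- squeezing a quadrant into a thin sector is bilipschitz and does exactly that; the correct $d=2$ obstruction is that any blow-up limit $g = \lim_k f(R_k \,\cdot)/R_k$ is injective, and an injective map cannot send the cross $\RR e_1 \cup \RR e_2$ into the line $w^\perp$.
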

\begin{proof}
Since $x \mapsto \langle v, f(x)\rangle$ is Lipschitz continuous for any $v \in \mathcal{S}^{d-1}$, Theorem \ref{thm:distn_class_closed} implies $\langle v, f(X)\rangle \in \overline{\mathcal{L}_{\nu}^{1}}$. Let $\theta \in (0,\pi/2)$ (say, $\theta = \pi / 4$), and let $S_v = \{x\, : \, \cos^{-1}(\langle x/\|x\|, v\rangle)\leq\theta\}$ for each $v \in \mathcal{S}^{d-1}$. Then
\[
H_v \coloneqq \{x\,:\,\langle v, x \rangle > 1\} \supset \{x\,:\,\|x\| > (1-\cos\theta)^{-1}\} \cap S_v.
\]
From Theorem C.2.1 of \cite{buraczewski2016stochastic}, since $\nu \not \in \mathbb{Z}$, there exists a non-zero measure $\mu$ such that
\[
\mu(E) = \lim_{x \to \infty} \frac{\mathbb{P}(x^{-1}X \in E)}{\mathbb{P}(\|X\| > x)},
\]
for any Borel set $E$. Consequently, $\mu$ is regularly varying, and so 
by the spectral representation of regularly varying random vectors (see p. 281 \cite{buraczewski2016stochastic}), there exists a measure $P$ such that
\[
\lim_{x \to \infty}\frac{\mathbb{P}(\|X\|>tx, X/\|X\| \in E)}{\mathbb{P}(\|X\| > x)} = t^{-\nu} P(E),
\]
for any Borel set $E$ on $\mathcal{S}^{d-1}$ and any $t > 0$. Letting $F_v = \{y / \|y\|\,:\, f(y) \in S_v\} \subset \mathcal{S}^{d-1}$ (noting that $P(F_v) > 0$ by assumption), since $m\|x - y\| \leq \|f(x) - f(y)\| \leq M\|x - y\|$ for all $x,y$,
\begin{align*}
\liminf_{x \to \infty}
\frac{\mathbb{P}(f(X) \in x H_v)}{\mathbb{P}(\|f(X)\| > x)} 
&\geq \liminf_{x \to \infty}\frac{\mathbb{P}(\|f(X)\| > x(1-\cos\theta)^{-1}, f(X) \in S_v)}{\mathbb{P}(\|f(X)\| > x)} \\
&\geq \liminf_{x \to \infty}\frac{\mathbb{P}(\|X\| > x(m(1-\cos\theta))^{-1}, X/\|X\| \in F_v)}{\|X\| > x / M} \\
&\geq P(F_v) \left(\frac{M}{m(1-\cos\theta)}\right)^{-\nu} > 0.
\end{align*}
where $P(F_v) > 0$ follows from the bilipschitz condition for $f$. Therefore, we have shown that $\mathbb{P}(\langle v, f(X)\rangle > x) = \Theta(\mathbb{P}(\|f(X)\| > x))$ for every $v \in \mathcal{S}^{d-1}$.
Since $\mathbb{P}(\|f(X)\| > x)$ obeys a power law with exponent $\nu$ by Corollary \ref{corr:heavy_to_light}, $f(X)$ is tail-isotropic with exponent $\nu$.
\end{proof}

%\begin{proof}[Proof of \Cref{prop:isotropic-pushforward}]
%  Since $f_\theta$ satisfies \cref{assump:lipschitz}, we have $\max(\|f_\theta\|_{\text{Lip}},\|f_\theta^{-1}\|_{\text{Lip}} \leq L < \infty$

%   Fix $z \in \cS^{d-1}$.
%   Note tail parameters are invariant under scalar multiplication as a consequence of
%   asymptotic notation, so combined with repeated applications of \Cref{lem:sum-rule} we
%   have that the linear combination $\braket{f_\theta(X), z}$
%   of $\{f_\theta(X)_i\}_{i \in [d]}$ is tail-isotropic provided
%   every $f_\theta(X)_i \in \cL_\nu^1$.
%   Let $i \in [d]$ be arbitrary.
%   Notice that the projection map $\pi_i = x \mapsto x_i$
%   is $1$-Lipschitz hence $\pi_i \circ f_\theta = x \mapsto f_\theta(x)_i$
%   is $L$-Lipschitz. In particular, $g_{x_{-i}} = x_i \mapsto (\pi \circ_i f_\theta)(x_{-i}, x_i)$ is $L$-Lipschitz for every $x_{-i} \in \mathbb{R}^{d-1}$.
  
%   Consider the $d$ times iterated conditional expectation
%   \begin{align*}
%       \mathbb{P}[f_\theta(X)_i \geq t]
%       &= \EE[\EE[\cdots\EE[1(f(X)_i \geq t) \mid X_1, \ldots, X_{i-1},X_{i+1},\ldots,X_d] \cdots \mid X_1 ]] \\
%       &= \EE[\EE[\cdots\mathbb{P}[g_{X_{-i}}(X_i) \geq t \mid X_{-i} ] \cdots \mid X_1 ]]
%   \end{align*}
%   Since $X \sim \mu$ is $\nu$-isotropic, we have that
%   $X_i = \braket{X, e_i} \in \cL_\nu^1$.
%   \feynman{To apply \cref{corr:heavy_to_light}, we need $x_i \mapsto f(x_{-i}, x_i)_i$ to be bi-Lipschitz for every $i \in [d]$ i.e. separately bilipschitz, see component Lipschitz constants in \url{http://www.optimization-online.org/DB_FILE/2014/12/4679.pdf}}
%\end{proof}

\section{Experiments performing VI against a fat-tailed Cauchy target}
\label{sec:cauchy_normal_student}
The motivation for the fat-tailed variational families used in TAF/ATAF
is easily illustrated on a toy example consisting of $X \sim \text{Cauchy}(x_0 = 0, \gamma = 1) \in \cL^1_1$.
As seen in \Cref{fig:cauchy_normal_student}, while ADVI with normalizing flows \citep{kingma2016improved,webb2019improving}
appears to provide a reasonable fit to the bulk of the target distribution (left panel), the improper
imposition of sub-Gaussian tails results in an exponentially bad tail approximation (middle panel).
As a result, samples drawn from the variational approximation fail a Kolmogorov-Smirnov goodness-of-fit
test against the true target distribution much more often (right panel, smaller $p$-values imply more rejections)
than a variational approximation which permits fat-tails. This example is a special case of \Cref{thm:distn_class_closed}.

\begin{figure*}[htbp]
  \centering
  \includegraphics{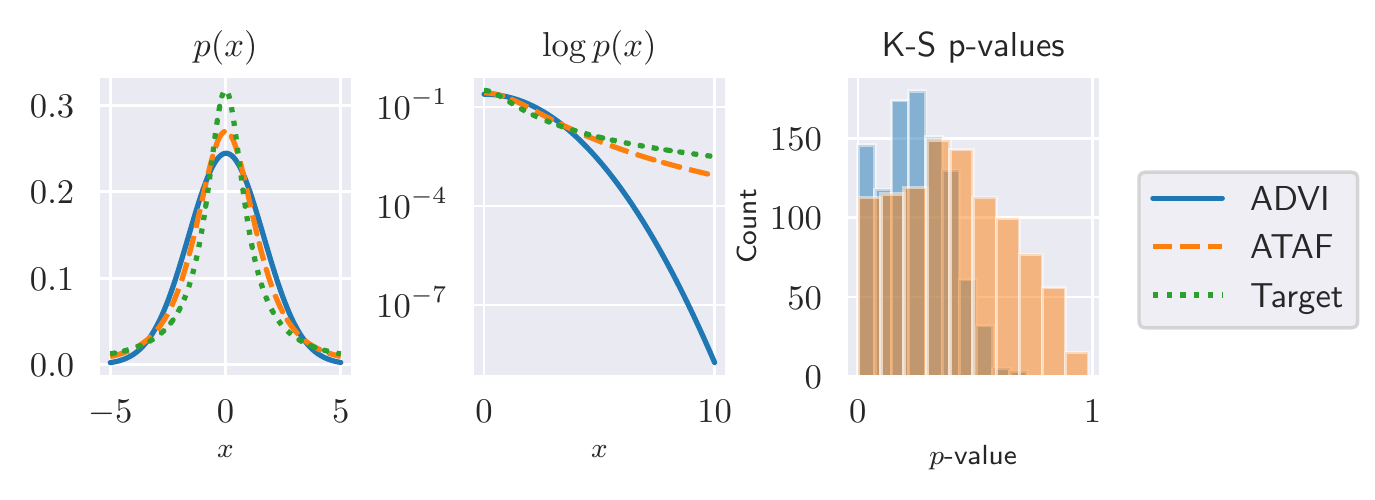}
  \vspace{-6mm}
  \caption{
    When performing FTVI to approximate a $X \sim \text{Cauchy}(x_0 = 0, \gamma = 1)$ target (left panel, green dotted line),
    the use of a sub-Gaussian variational family (ADVI, solid blue line) can incur
    exponentially bad tail approximations (middle panel) compared to
    methods which permit heavier tails (ATAF, green dashed line) and results in
    samples which are more similar to the target (as measured by $1000$ repeats of
    1-sample $N=100$ Kolmogorov-Smirnov $p$-values, right panel).
    % \michael{Why does (b) look so good; are the tails really perfect, of if we went out to 15 or 30 on the X axis.  It might be good to show ATAF is much better but still not perfect.  Also, what is the X axis on each figure.  }
  }
  \label{fig:cauchy_normal_student}
\end{figure*}

\section{Normal-normal location mixture}
\label{sec:normal-normal-location-mixture}

We consider a Normal-Normal conjugate inference problem where the posterior
is known to be a Normal distribution as well. Here, we aim to show that ATAF
performs no worse than ADVI because $\text{StudentT}(\nu) \to N(0, 1)$ as $\nu \to \infty$.
\Cref{fig:normal_normal} shows the resulting density approximation, which can
be seen to be reasonable for both a Normal base distribution (the ``correct'' one)
and a StudentT base distribution. This suggests that mis-specification (i.e., heavier
tails in the base distribution than the target) may not be too problematic.

\begin{figure}[H]
  \centering
  \includegraphics[width=0.6\textwidth]{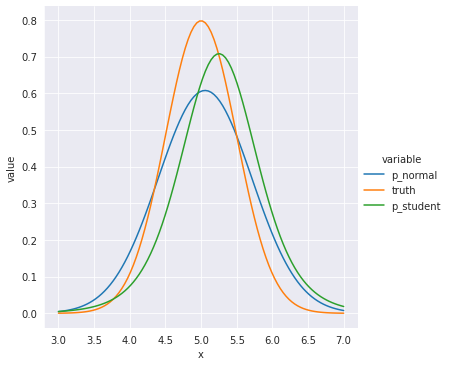}
  \caption{VI against a Normal posterior}
  \label{fig:normal_normal}
\end{figure}

\section{Example of non-existence of tail parameter due to oscillations}
\label{eg:spiral}

Consider $\text{StudentT}(\nu=1) \otimes \text{StudentT}(\nu=2)$ and ``spin'' it
using the radial transformation $(r,\theta) \mapsto (r,r+\theta)$ (\Cref{fig:spiral}). Due to
oscillations, $\alpha_X(v)$ is not well defined for all $v \in \cS^{1}$.

\begin{figure*}[htbp]
    \centering
    \includegraphics[scale=0.8]{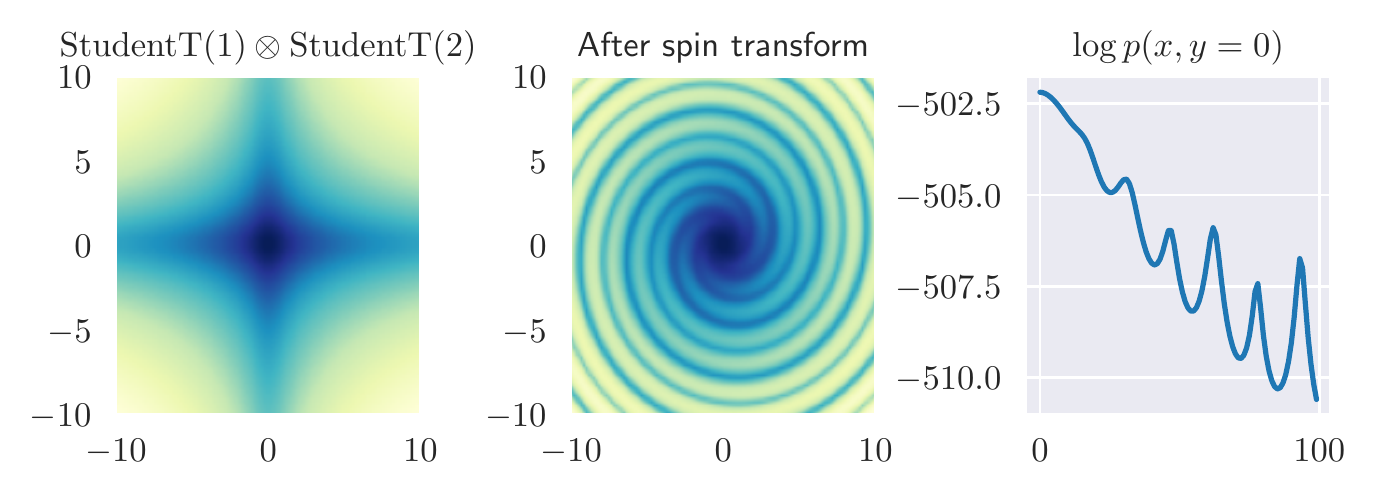}
    \caption{Taking a tail-anisotropic distribution (top) and ``spinning'' it (middle) results in
        one-dimensional projections which oscillate between tail parameters (as seen in
        $\log p(\braket{X,e_0})$ in bottom panel) and cause $\alpha_X(\cdot)$ to be not well defined.
    }
    \label{fig:spiral}
\end{figure*}

\section{Additional details for experiments}
\label{sec:additional-exp-details}

All experiments were performed on an Intel i8700K with 32GB RAM and a NVIDIA GTX 1080
running PyTorch 1.9.0 / Python 3.8.5 / CUDA 11.2 / Ubuntu Linux 20.04 via Windows Subsystem for Linux.
For all flow-transforms $\Phi_{\text{Flow}}$ we used inverse autoregressive flows \citep{kingma2016improved} with a
dense autoregressive conditioner consisting of two layers of either 32 or 256 hidden units depending on problem (see code for details) and
ELU activation functions.
As described in \cite{jaini2020tails}, TAF is trained by including $\nu$ within the Adam optimizer alongside other flow parameters. For ATAF, we include all $\nu_i$ within the optimizer.
Models were trained using the Adam optimizer with $10^{-3}$ learning rate
for 10000 iterations, which we found empirically in all our experiments to result in negligible change in ELBO
at the end of training.

For \cref{tab:diamonds} and \cref{tab:eight_schools}, the flow transform $\Phi_{\text{Flow}}$ used for ADVI, TAF, and ATAF
are comprised of two hidden layers of 32 units each. NUTS uses no such flow transform. Variational parameters for each normalizing flow were initialized
using \texttt{torch}'s default Kaiming initialization \citep{he2015delving} Additionally, the tail parameters $\nu_i$
used in ATAF were initialized to all be equal to the tail parameters learned from training TAF. We empirically observed
this resulted in more stable results (less variation in ELBO / $\log p(y)$ across trials), which may be due to
the absence of outliers when using a Gaussian base distribution resulting in more stable ELBO gradients. This suggests
other techniques for handling outliers such as winsorization may also be helpful, and we leave further investigation
for future work.

For \cref{fig:blr-anisotropic}, the closed-form posterior was computed over a finite element grid to produce
the ``Target'' row. A similar progressive training scheme used for \cref{tab:diamonds} was also used here, with
the TAF flow transform $\Phi_{\text{Flow}}$ was initialized from the result of ADVI and ATAF additionally initialized
all tail parameters $\nu_i$ based on the final shared tail parameter obtained from TAF training. Tails are computed
along the $\beta = 1$ or $\sigma = 1$ axes because the posterior is identically zero for $\sigma = 0$ hence it reveals
no information about the tails.

\end{document}